\newtheorem{thm}{Theorem}
\newtheorem{lem}{Lemma}
\newtheorem{defn}{Definition}[section]
\newtheorem{conj}{Conjecture}[section]
\newtheorem{rem}{Remark}
\begin{document}
\title{Safe Navigation and Obstacle Avoidance Using \\ Differentiable Optimization Based Control Barrier Functions}

\author{Bolun~Dai$^{1}$, Rooholla~Khorrambakht$^{1}$,~\IEEEmembership{Student Member,~IEEE}, Prashanth~Krishnamurthy$^{1}$,~\IEEEmembership{Member,~IEEE},\\  Vin\'icius~Gon\c{c}alves$^{2}$, Anthony~Tzes$^{2}$,~\IEEEmembership{Senior Member,~IEEE}, Farshad~Khorrami$^{1}$,~\IEEEmembership{Senior Member,~IEEE}%
\thanks{This work was supported by the NYUAD Center for Artificial Intelligence and Robotics (CAIR), funded by Tamkeen under the NYUAD Research Institute Award CG010.}%
\thanks{$^{1}$Bolun Dai, Rooholla Khorrambakht, Prashanth Krishnamurthy, and Farshad Khorrami are with Control/Robotics Research Laboratory, Electrical~\&~Computer Engineering Department, Tandon School of Engineering, New York University, Brooklyn, NY 11201
{\tt\footnotesize \{bd1555, rk4342, prashanth.krishnamurthy, khorrami\}@nyu.edu}}%
\thanks{$^{2} $Vin\'icius Gon\c{c}alves and Anthony Tzes are with Electrical Engineering, New York University Abu Dhabi, Abu Dhabi 129188, United Arab Emirates
{\tt\footnotesize \{vmg6973, anthony.tzes\}@nyu.edu}}
}

% The paper headers
\markboth{IEEE Robotics and Automation Letters. Preprint Version. Accepted July, 2023}
{Dai \MakeLowercase{\textit{et al.}}: Safe Navigation and Obstacle Avoidance Using Differentiable Optimization Based Control Barrier Functions}

% make the title area
\maketitle

\begin{abstract}
Control barrier functions (CBFs) have been widely applied to safety-critical robotic applications. However, the construction of control barrier functions for robotic systems remains a challenging task. Recently, collision detection using differentiable optimization has provided a way to compute the minimum uniform scaling factor that results in an intersection between two convex shapes and to also compute the Jacobian of the scaling factor. In this paper, we propose a framework that uses this scaling factor, with an offset, to systematically define a CBF for obstacle avoidance tasks. We provide theoretical analyses of the continuity and continuous differentiability of the proposed CBF. We empirically evaluate the proposed CBF's behavior and show that the resulting optimal control problem is computationally efficient, which makes it applicable for real-time robotic control. We validate our approach, first using a 2D mobile robot example, then on the Franka-Emika Research~3 (FR3) robot manipulator both in simulation and experiment. 
\end{abstract}
\begin{IEEEkeywords}
Robot safety, collision avoidance.
\end{IEEEkeywords}
\IEEEpeerreviewmaketitle

\section{Introduction}
\IEEEPARstart{S}{afety} is a key consideration when designing control algorithms for robotic applications~\cite{DaiHKK23, DaiKPK23, AmesCENST19} considering rapid integration of robotic systems into our daily lives~\cite{DaiKPK21}. Model predictive control (MPC) and trajectory optimization (TO) based methods have been widely used for safety-critical robot applications, e.g., obstacle avoidance. However, the computation time of MPC and TO based methods limits their deployment on systems requiring fast response time. Additionally, for MPC-based approaches, safety is only guaranteed within the preview horizon. A short preview horizon might lead to abrupt actions to ensure safety, while large preview horizons increase the computation time. A new control paradigm, CBF-based control~\cite{AmesCENST19}, has become popular for safe robotic control since it provides a simple and computationally efficient way for safe control synthesis. Another benefit of CBFs is that CBF constraints take safety into consideration even far away from the safe set boundary.

\begin{figure}[t!]
    \centering
    \includegraphics[width=0.45\textwidth]{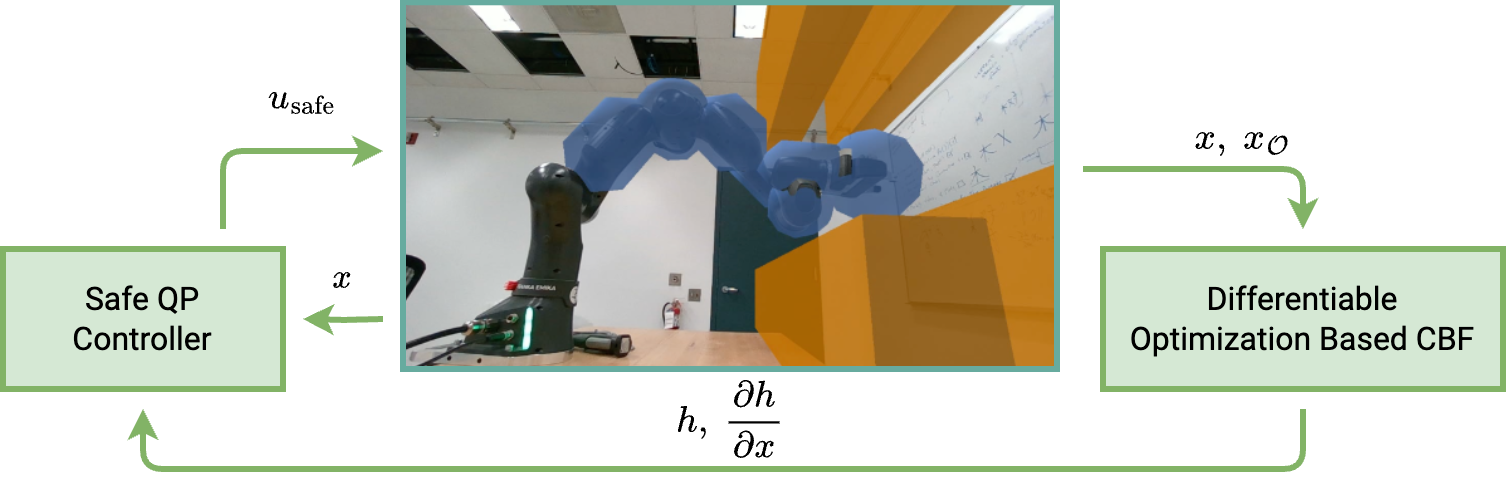}
    \vspace{-1em}
    \caption{The structure of safe robotic control using differentiable-optimization-based CBFs.}
    \label{fig:DiffOptCBF}
\end{figure}

One of the main hurdles to applying CBF-based methods to robotic systems is the construction of a valid CBF. Although work has been done in synthesizing CBFs for robots, there still lacks a systematic approach for CBF synthesis~\cite{NguyenHGAS16}. Given this difficulty, work has been done in learning CBFs from expert data of safe and unsafe interactions~\cite{RobeyHLZDTM20}. CBFs for robotics systems can also be learned online using onboard sensor data~\cite{LiZNLLB21}. Additionally, work has been done in learning CBFs starting from a handcrafted CBF~\cite{DaiKK22}. Although the learning-based methods for CBF synthesis are appealing, acquiring a dataset may be costly in many cases, e.g., self-driving cars. To simplify the CBF construction, work has been done in modeling the interaction between robots and the environment using points and objects~\cite{DaiKK22}, spheres~\cite{SingletaryKA22}, and points and higher-order ellipsoids~\cite{MurtazaAWH22}. However, since robotic systems consist of 3D geometrical entities, these approximations either completely ignore or over-estimate the robot geometry, which leads to over-conservative control policies. Recently, work has been done in constructing CBFs for robots represented as polytopic shapes~\cite{ThirugnanamZS22} and finding the safe control action using nonlinear model predictive control (NMPC). However, extending their formulation to other shapes, e.g., spheres and cylinders, is not straightforward. Another possible choice for constructing CBFs is to use a proximity measurement like the shortest distance. However, the shortest distance is not well defined when two objects overlap. To address this issue, signed distance functions (SDFs) are used instead in~\cite{SingletaryGMSA22} to construct CBFs. However, algorithms used to compute the SDF, e.g., the Gilbert–Johnson–Keerthi (GJK) distance algorithm~\cite{GilbertJK88} and the expanding polytope algorithm (EPA)~\cite{Bergen01}, are not differentiable~\cite{TracyHM22} and SDFs are also nonsmooth~\cite{SingletaryGMSA22}. Thus, an approximated version of the partial derivative of the SDF is used along with a looser constraint to generate safe control actions, which results in a conservative controller. 

We construct our CBF in this paper to overcome the above limitations based on differentiable optimization methods. Unlike traditional optimization solvers, differentiable optimization solvers, in addition to the optimal solution, also provide the partial derivatives of the optimal solution with respect to the problem parameters~\cite{AmosK17}. Given their differentiability, differentiable optimization solvers have gained popularity in the machine learning community by integrating them within a deep learning pipeline~\cite{AgrawalABBDK19}. Recently, differentiable optimization solvers have also been used for collision detection between two convex shapes~\cite{TracyHM22} by finding the minimum scaling factor for the two objects that leads to their collision.

This work builds upon the idea of using a differentiable collision detector and proposes a method to define CBFs for robotic obstacle avoidance tasks systematically. The proposed approach is efficient to compute, directly differentiable, handles a wide range of geometries, and is well-defined even in collision. The main contribution of this paper is twofold: (1) proposing and theoretically analyzing a differentiable optimization based approach to synthesize CBFs for robotic obstacle avoidance tasks that consider both the robot and scene geometry; (2) performing simulations and experiments (on FR3) to show the efficacy of our approach. This paper is structured as follows. In Section~\ref{sec:preliminaries}, we briefly review CBFs and mathematical foundations of differentiable optimization solvers. In Section~\ref{sec:problem_formulation}, we formulate the safe robotic control problem. In Section~\ref{sec:method}, we present our approach for constructing CBFs using differentiable optimization solvers. In Section~\ref{sec:experiments}, we show efficacy of our approach using a 2D mobile robot and on the seven degrees-of-freedom (DOF) FR3 robotic arm in both simulation and real world. Section~\ref{sec:conclusion} concludes the paper with a discussion on future directions.
\section{Preliminaries}
\label{sec:preliminaries}
This section presents a brief introduction to CBF and differentiable optimization.

\subsection{Control Barrier Functions}
Consider a control affine system
\begin{equation}
    \dot{x} = F(x) + G(x)u
    \label{eq:control_affine_sys}
\end{equation}
where the state is $x\in\mathbb{R}^n$ and the control input is $u\in\mathbb{R}^m$, with $\mathcal{U}$ being the admissible set of controls. The locally Lipschitz continuous functions $F: \mathbb{R}^n\rightarrow\mathbb{R}^n$ and $G: \mathbb{R}^n\rightarrow\mathbb{R}^{n\times m}$ represent the drift and the control influence matrix, respectively. Additionally, we assume access to a controller $u = \pi(x)$, with $\pi: \mathbb{R}^n\rightarrow\mathbb{R}^{m}$ being locally Lipschitz continuous. We say the controller $\pi$ can keep the system in~\eqref{eq:control_affine_sys} safe with respect to a set $\mathcal{C}\subset\mathbb{R}^n$ if the controller $\pi$ renders the set $\mathcal{C}$ forward control invariant. In other words, the controller $\pi$ keeps~\eqref{eq:control_affine_sys} safe with respect to $\mathcal{C}$, if for any initial state $x_0 \in \mathcal{C}$, the solution to~\eqref{eq:control_affine_sys}, defined as $x(t)$, remains within the safe set $\mathcal{C}$ $\forall t\in\mathbf{I}(x_0)$. The time interval of existence $\mathbf{I}(x_0) = [t_0, t_{\mathrm{max}})$ is where $x(t)$ is a unique solution to~\eqref{eq:control_affine_sys}; the system defined in~\eqref{eq:control_affine_sys} is considered forward complete when $t_{\mathrm{max}} = \infty$. Let the set $\mathcal{C}$, with $\mathcal{C} \subset \mathcal{D} \subset \mathbb{R}^n$, be the 0-superlevel set of a continuously differentiable function $\mathbf{h}:\mathcal{D} \rightarrow \mathbb{R}$ that has the property $\partial\mathbf{h}/\partial x \neq 0$ for all $x\in\partial\mathcal{C}$. Then, for~\eqref{eq:control_affine_sys}, if 
\begin{equation}
    \sup_{u\in\mathcal{U}}\Big[\frac{\partial\mathbf{h}(x)}{\partial x}\Big(F(x) + G(x)u\Big)\Big] \geq -\Lambda(\mathbf{h}(x))
    \label{eq:CBF_constraint}
\end{equation}
holds for all $x\in\mathcal{D}$, with $\Lambda: \mathbb{R}\rightarrow\mathbb{R}$ being an extended class $\mathcal{K}_\infty$ function\footnote{Extended class $\mathcal{K}_\infty$ functions are strictly increasing with $\Lambda(0) = 0$.}, we say that $\mathbf{h}$ is a CBF on $\mathcal{C}$. 

\subsection{Differentiable Optimization}
Consider a convex optimization problem in the form of
\begin{align}
    \min_{y\in\mathbb{R}^n}\ &\ f(y \mid \psi)\\
    \mathrm{subject\ to}\ &\ \ell(y \mid \psi) = 0\nonumber\\
                          &\ h(y \mid \psi) \leq 0\nonumber
\end{align}
with $f:\mathbb{R}^n\rightarrow\mathbb{R}$ being the convex objective function, $\ell:\mathbb{R}^n\rightarrow\mathbb{R}^{n_e}$ the affine equality constraints, $h:\mathbb{R}^n\rightarrow\mathbb{R}^{n_i}$ the convex inequality constraints, and $\psi\in\mathbb{R}^{n_\psi}$ the problem configuration parameters. $n_e$, $n_i$, and $n_\psi$ represent the number of equality constraints, inequality constraints, and problem configuration parameters, respectively. The Karush-Kuhn-Tucker (KKT) conditions for stationarity, primal feasibility, and complementary slackness are
\begin{subequations}
\label{eq:kkt}
\begin{align}
    0 &= \partial_y\Big(f(y^\star) + (\lambda^\star)^\top\ell(y^\star) + (\nu^\star)^\top h(y^\star)\Big)\\
    0 &= \ell(y^\star) \in \mathbb{R}^{n_e}\\
    0 &= \mathrm{diag}(\nu^\star)h(y^\star) \in \mathbb{R}^{n_i}
\end{align}
\end{subequations}
with $\lambda\in\mathbb{R}^{n_e}$ and $\nu\in\mathbb{R}^{n_i}$ being the dual variables, the superscript $\star$ representing the optimal values, and the $\psi$'s omitted for brevity. Define $z^\star(\psi) = \begin{bmatrix}
        y^\star(\psi) & \lambda^\star(\psi) & \nu^\star(\psi)
\end{bmatrix}^\top\in\mathbb{R}^{n_z}$, with $n_z = n + n_e + n_i$. Eq.~\eqref{eq:kkt} can be written as
\begin{equation}
    g(z^\star(\psi), \psi) = 0
    \label{eq:kkt_g}
\end{equation}
with $g:\mathbb{R}^{n_z}\times\mathbb{R}^{n_{\psi}}\rightarrow\mathbb{R}^{n_z}$. To compute the partial derivative of $z^\star$ w.r.t $\psi$, we utilize the implicit function theorem.

\begin{thm}[Implicit Function Theorem~\cite{Dini07}]
\label{thm:implicit_function_theorem}
Let $g:\mathbb{R}^{n_z + n_\psi} \rightarrow \mathbb{R}^{n_\psi}$ be a continuously differentiable function, $z_0^\star \in \mathbb{R}^{n_z}$, and $\psi_0 \in \mathbb{R}^{n_\psi}$. Let $(z_0^*, \psi_0)$ satisfy $g(z_0^\star, \psi_0) = 0$, and assume $\partial_{z^\star}g(z^\star, \psi_0)|_{z^\star = z_0^\star}$ is invertible. Then there exist open sets $S_{z^\star} \subset \mathbb{R}^{n_z}$ and $S_\psi \subset \mathbb{R}^{n_\psi}$ containing $z_0^\star$ and $\psi_0$, respectively, and a unique continuously differentiable function $z^\star: \mathbb{R}^{n_\psi} \rightarrow \mathbb{R}^{n_z}$ such that $z^\star(\psi_0) = z_0^\star$, $g(z^\star(\psi_0), \psi_0) = 0$. 
\end{thm}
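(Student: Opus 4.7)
The plan is to construct $z^\star$ as the unique solution to $g(z,\psi)=0$ near $(z_0^\star,\psi_0)$ via the Banach contraction mapping principle, and then obtain $C^1$ regularity by differentiating the identity $g(z^\star(\psi),\psi)=0$. The invertibility of $A := \partial_{z^\star} g(z_0^\star,\psi_0)$ furnishes the quantitative tool needed to set up the contraction.

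First, I would introduce the auxiliary map $T(z,\psi) = z - A^{-1} g(z,\psi)$, whose fixed points in $z$ coincide exactly with zeros of $g(\cdot,\psi)$. A direct computation gives $\partial_z T(z_0^\star,\psi_0) = I - A^{-1} A = 0$, so continuity of $\partial_z T$ (guaranteed by $g \in C^1$) furnishes radii $r,\delta>0$ with $\|\partial_z T(z,\psi)\| \le 1/2$ on $\overline{B}_r(z_0^\star)\times \overline{B}_\delta(\psi_0)$. The mean value inequality then shows $T(\cdot,\psi)$ is a $\tfrac{1}{2}$-contraction on $\overline{B}_r(z_0^\star)$ for every such $\psi$.

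Next, I would shrink $\delta$ further if necessary so that $\|T(z_0^\star,\psi) - z_0^\star\| = \|A^{-1}(g(z_0^\star,\psi)-g(z_0^\star,\psi_0))\|$, which is controlled by continuity of $g$ at $\psi_0$, stays below $r/2$; together with the contraction estimate this makes $T(\cdot,\psi)$ a self-map of $\overline{B}_r(z_0^\star)$. Banach's fixed-point theorem then yields a unique $z^\star(\psi)\in \overline{B}_r(z_0^\star)$ satisfying $g(z^\star(\psi),\psi)=0$, with uniqueness built into the construction and continuity of $z^\star$ coming from the standard parameter-dependent contraction estimate. Taking $S_{z^\star}=B_r(z_0^\star)$ and $S_\psi=B_\delta(\psi_0)$ gives the advertised open neighborhoods.

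Finally, to obtain $C^1$ regularity I would differentiate $g(z^\star(\psi),\psi)=0$ formally to obtain the candidate
\[
\partial_\psi z^\star(\psi) = -\bigl[\partial_{z^\star} g(z^\star(\psi),\psi)\bigr]^{-1}\partial_\psi g(z^\star(\psi),\psi),
\]
with invertibility of the inner Jacobian preserved on a small neighborhood of $\psi_0$ since the set of invertible linear maps is open and $\partial_z g$ is continuous. The main obstacle is promoting this formal identity to a bona fide Fréchet derivative: this requires combining the first-order Taylor expansion of $g$ about $(z^\star(\psi),\psi)$ with the Lipschitz bound on $z^\star$ to extract an $o(\|\Delta\psi\|)$ remainder after inverting. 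Once that estimate is in hand, continuity of the right-hand side in $\psi$ immediately upgrades mere differentiability to continuous differentiability, completing the proof.
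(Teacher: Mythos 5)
The paper does not prove this statement: it is the classical Implicit Function Theorem, quoted verbatim with a citation to Dini and used as a black box to justify the sensitivity computation in Eqs.~(6)--(8). So there is no ``paper proof'' to compare against; your proposal is a from-scratch proof of a cited textbook result.

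On its own merits, your argument is the standard and correct contraction-mapping proof. The key steps are all present and in the right order: the Newton-type auxiliary map $T(z,\psi) = z - A^{-1}g(z,\psi)$ whose fixed points are the zeros of $g(\cdot,\psi)$; the observation that $\partial_z T$ vanishes at the base point so that continuity of $\partial_z T$ yields a uniform $\tfrac{1}{2}$-Lipschitz bound on a product of closed balls; the shrinking of $\delta$ to make $T(\cdot,\psi)$ a self-map of $\overline{B}_r(z_0^\star)$; Banach's fixed-point theorem for existence, uniqueness, and continuity of $\psi\mapsto z^\star(\psi)$; and the promotion of the formal identity $\partial_\psi z^\star = -(\partial_{z^\star}g)^{-1}\partial_\psi g$ to an actual derivative via the first-order Taylor remainder of $g$ together with the local Lipschitz bound on $z^\star$. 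You are right to flag the last step as the only delicate one, and your plan for it (openness of the set of invertible matrices, plus an $o(\|\Delta\psi\|)$ estimate) is exactly what is needed. One small point worth noting: the theorem as printed in the paper has the codomain of $g$ as $\mathbb{R}^{n_\psi}$, which is inconsistent with invertibility of $\partial_{z^\star}g$ and with the earlier definition $g:\mathbb{R}^{n_z}\times\mathbb{R}^{n_\psi}\rightarrow\mathbb{R}^{n_z}$; your proof tacitly (and correctly) reads the codomain as $\mathbb{R}^{n_z}$ so that $A$ is square. The conclusion should also be read as $g(z^\star(\psi),\psi)=0$ for all $\psi\in S_\psi$, which is what your construction actually delivers.
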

Assume the implicit function Theorem holds for $g$, $z_0^\star$, and $\psi_0$. Taking the derivative on both sides of Eq.~\eqref{eq:kkt_g} yields  
\begin{equation}
    \frac{dg(z^\star, \psi)}{d\psi}\Bigg|_{\substack{\vspace{-1.5em} \\ {z^\star = z_0^\star} \\ {\psi = \psi_0}}} = 0.
    \label{eq:ift_1}
\end{equation}
Then, using the chain rule, we have
\begin{equation}
    \partial_{z^\star}g(z^\star, \psi_0)\Big|_{z^\star = z_0^\star}\partial_\psi z^\star(\psi)\Big|_{\psi = \psi_0} + \partial_\psi g(z_0^\star, \psi)\Big|_{\psi = \psi_0} = 0.
    \label{eq:ift_2}
\end{equation}
Finally, applying the implicit function Theorem yields
\begin{equation}
    \partial_\psi z^\star(\psi)\Big|_{\psi = \psi_0} = -\partial_{z^\star}^{-1}g(z^\star, \psi_0)\Big|_{z^\star = z_0^\star}\partial_\psi g(z_0^\star, \psi)\Big|_{\psi = \psi_0}.
    \label{eq:ift_3}
\end{equation}
This provides a way to efficiently and exactly compute the gradient of the optimal solution of a convex optimization problem with respect to its problem configuration.

\subsection{Berge's Maximum Theorem}
A key theoretical result we use later in the paper is Berge's Maximum Theorem.
\begin{thm}[Berge's Maximum Theorem~\cite{Border85}]
\label{thm:berge}
Let $X$ and $\Psi$ be topological spaces, and $\mathcal{J}: X\times\Psi \rightarrow \mathbb{R}$ be a continuous function on $X\times\Psi$ and $\Gamma: \Psi\rightrightarrows X$ be a compact-valued correspondence\footnote{Correspondences are set-valued functions.} such that $\Gamma(\psi) \neq \emptyset$ for all $\psi\in\Psi$. Define the value function $\mathcal{J}^\star: \Psi \rightarrow \mathbb{R}$ as
\begin{equation}
    \mathcal{J}^\star(\psi) = \sup\{\mathcal{J}(x, \psi) \mid x \in \Gamma(\psi)\}
\end{equation}
and the solution function $\Gamma^\star: \Psi \rightarrow X$ as
\begin{equation}
    \Gamma^\star(\psi) = \{x \mid x \in \Gamma(\psi), \mathcal{J}(x, \psi) = \mathcal{J}^\star(\psi)\}.
\end{equation}
If $\Gamma$ is an upper and lower hemicontinuous (UHC/LHC) correspondence~\cite{Border85} at $\psi$, then $\mathcal{J}^\star$ is continuous and $\Gamma^\star$ is UHC with nonempty and compact values.
\end{thm}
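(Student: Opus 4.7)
The plan is to split the conclusion into three pieces: (i) $\Gamma^\star(\psi)$ is nonempty and compact, (ii) $\mathcal{J}^\star$ is continuous, and (iii) $\Gamma^\star$ is UHC. Step (i) is the easiest: for fixed $\psi$, $\mathcal{J}(\cdot,\psi)$ is continuous on the compact set $\Gamma(\psi)$, so by the extreme value theorem the supremum is attained, giving $\Gamma^\star(\psi)\neq\emptyset$; and $\Gamma^\star(\psi)=\Gamma(\psi)\cap\{x:\mathcal{J}(x,\psi)=\mathcal{J}^\star(\psi)\}$ is the intersection of a compact set with a closed set, hence compact.

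For (ii) I would prove upper and lower semicontinuity of $\mathcal{J}^\star$ separately, using the two hemicontinuity hypotheses respectively. For upper semicontinuity, take a net $\psi_\alpha\to\psi$ and pick maximizers $x_\alpha\in\Gamma^\star(\psi_\alpha)$; upper hemicontinuity of $\Gamma$ together with compactness of values lets me extract a subnet with $x_\alpha\to x\in\Gamma(\psi)$, and continuity of $\mathcal{J}$ gives $\limsup \mathcal{J}^\star(\psi_\alpha)=\lim \mathcal{J}(x_\alpha,\psi_\alpha)=\mathcal{J}(x,\psi)\leq \mathcal{J}^\star(\psi)$. For lower semicontinuity, fix any maximizer $x\in\Gamma^\star(\psi)$; lower hemicontinuity of $\Gamma$ supplies a net $x_\alpha\in\Gamma(\psi_\alpha)$ with $x_\alpha\to x$, and then $\mathcal{J}^\star(\psi_\alpha)\geq \mathcal{J}(x_\alpha,\psi_\alpha)\to \mathcal{J}(x,\psi)=\mathcal{J}^\star(\psi)$, so $\liminf \mathcal{J}^\star(\psi_\alpha)\geq \mathcal{J}^\star(\psi)$. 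Combining the two inequalities yields continuity of $\mathcal{J}^\star$.

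For (iii), I would verify UHC of $\Gamma^\star$ directly. Given $\psi_\alpha\to\psi$ and $x_\alpha\in\Gamma^\star(\psi_\alpha)$, upper hemicontinuity of $\Gamma$ (with compact values) yields a subnet $x_\alpha\to x\in\Gamma(\psi)$. By joint continuity of $\mathcal{J}$ and the continuity of $\mathcal{J}^\star$ established in step (ii), $\mathcal{J}(x,\psi)=\lim \mathcal{J}(x_\alpha,\psi_\alpha)=\lim \mathcal{J}^\star(\psi_\alpha)=\mathcal{J}^\star(\psi)$, so $x\in\Gamma^\star(\psi)$; this cluster-point characterization of UHC for compact-valued correspondences closes the argument, and compact-valuedness of $\Gamma^\star$ was already shown in (i).

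The main obstacle is that the hypotheses are stated for arbitrary topological spaces rather than metric spaces, so sequential arguments do not suffice in general and nets (or, equivalently, careful open-set arguments using the definitions of UHC and LHC) are required; in particular, applying UHC to extract the limit point in steps (ii) and (iii) must be done via the net/filter characterization of hemicontinuity, and one must justify that compact-valuedness of $\Gamma$ still guarantees accumulation points of the selected net $\{x_\alpha\}$. Once that technical machinery is in place, the rest of the argument is a clean bookkeeping of the two inequalities and the definition of $\Gamma^\star$.
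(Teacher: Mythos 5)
The paper does not prove this statement at all: Berge's Maximum Theorem is quoted verbatim as a known result with a citation to Border's book, and is then only \emph{applied} (to the correspondence $\Gamma(\psi)$ of Lemma~1) in the proof of Theorem~3. So there is no in-paper proof to compare yours against; what you have written is the standard textbook proof of Berge's theorem itself. On its own merits your outline is correct and complete in structure: (i) nonemptiness via the extreme value theorem on the compact set $\Gamma(\psi)$ and compactness of $\Gamma^\star(\psi)$ as a closed subset of a compact set (note you do not need the ambient space to be Hausdorff for this, since you are intersecting $\Gamma(\psi)$ with the preimage of a point under the continuous map $\mathcal{J}(\cdot,\psi)$); (ii) upper semicontinuity of $\mathcal{J}^\star$ from UHC and lower semicontinuity from LHC; (iii) UHC of $\Gamma^\star$ from the cluster-point characterization. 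The two places where the details genuinely matter are the ones you already flag: you must use the net characterizations --- a compact-valued correspondence is UHC at $\psi$ iff every net $x_\alpha\in\Gamma(\psi_\alpha)$ with $\psi_\alpha\to\psi$ admits a subnet converging to a point of $\Gamma(\psi)$, and LHC supplies selections only along a \emph{subnet} --- and in step (ii) you should first pass to a subnet along which $\mathcal{J}^\star(\psi_\alpha)$ converges to the $\limsup$ (resp.\ $\liminf$) before extracting the convergent selections, otherwise the chain of equalities $\limsup\mathcal{J}^\star(\psi_\alpha)=\lim\mathcal{J}(x_\alpha,\psi_\alpha)$ is not literally justified. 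With those standard repairs the argument is the classical one and there is nothing missing.
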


\section{Problem Formulation}
\label{sec:problem_formulation}

We consider the problem of CBF-based obstacle avoidance for robotic systems in the form of~\eqref{eq:control_affine_sys}. For each obstacle, we can represent the obstacle avoidance task as $c_i(x) \geq 0$, where $c_i:\mathbb{R}^{n} \rightarrow \mathbb{R}$ represents the $i$-th obstacle avoidance constraint. For each constraint function $c_i$, we can define its 0-superlevel set as $\mathcal{C}_i$. The safe set $\mathcal{C}$ for the robot is then the intersection of $\mathcal{C}_i$'s, i.e.,
\begin{equation}
    \mathcal{C} = \bigcap_{i=0}^{n_i}\mathcal{C}_i = \bigcap_{i=0}^{n_i}\big\{x\ |\ x\in\mathbb{R}^n, c_i(x) \geq 0\big\}.
\end{equation}
This paper aims to find a systematic way of constructing a CBF such that the robot can stay within $\mathcal{C}$.
\section{Differentiable Optimization Based CBFs}
\label{sec:method}
In this section, we present our proposed method of using differentiable optimization to compute CBFs for safe robotic control. First, we motivate our work by showing the limitations of SDF-based CBFs. Second, the CBF formulation is presented. Then, we show how to construct the CBF for robotic applications. Finally, we show how to construct the optimization problem for finding the safe control action.

\subsection{Motivation}
For maintaining safety in obstacle avoidance tasks, the most straightforward approach is to use a proximity measurement, i.e., SDF, for constructing the CBF, which for two convex objects $A$ and $B$ is defined as $\mathrm{SDF}(A, B) = \mathrm{distance}(A, B) - \mathrm{penetration}(A, B)$. Currently, the most efficient method to compute the SDF is using GJK to compute the distance and EPA to compute the penetration. Using GJK and EPA, we get the witness points $\hat{p}_A, \hat{p}_B \in \mathbb{R}^4$ and the vector of minimal translation $\hat{n}\in\mathbb{R}^4$ in homogeneous form. Then, the CBF can be written as~\cite{SchulmanDHLABPPGA14}
\begin{equation}
    \mathbf{h}_{\mathrm{SDF}}(x_A) = \hat{n}(x_A)^\top(F_A^W(x_A)\hat{p}_A(x_A) - F_B^W\hat{p}_B(x_A))
    \label{eq:sdf_cbf}
\end{equation}
with $F_A^W, F_B^W \in \mathbb{R}^{4\times4}$ denoting the poses of $A$ and $B$ and $x_A\in\mathbb{R}^n$ representing the state of $A$. In~\eqref{eq:sdf_cbf}, we assume $B$ is static. However, due to their logical control flows and pivoting, the output of GJK and EPA, i.e., $(\hat{p}_A, \hat{p}_B, \hat{n})$, is inherently non-differentiable~\cite{TracyHM22}. The work in~\cite{SingletaryGMSA22} bypasses this issue by only taking the partial derivative with respect to $F_A^W$ and treating the remainder terms as disturbance:
\begin{equation}
    \frac{\partial\mathbf{h}_{\mathrm{SDF}}}{\partial x_A} = \hat{n}(x_A)^\top\underbrace{\frac{\partial F_A^W}{\partial x_A}\hat{p}_A(x_A)}_{J_A(x_A)} + \delta(x_A)
\end{equation}
with $\delta: \mathbb{R}^n \rightarrow \mathbb{R}^n$ representing the remainder terms associated with the derivatives of $\hat{n}$, $\hat{p}_A$, and $\hat{p}_B$. Then, the safety constraint is written as
\begin{equation}
    \hat{n}(x_A)^\top J_A(x_A)\dot{x}_A \geq -\gamma\mathbf{h}_{\mathrm{SDF}} + 2J_\mathrm{max}\dot{x}_\mathrm{A, max}
\end{equation}
with $\|\delta(x)\|_\infty \leq 2\max_{x_A}\|J_A(x_A)\| = 2J_\mathrm{max}$ and $\|\dot{x}_\mathrm{A}\|_\infty = \dot{x}_\mathrm{A, max}$. For derivation details, please refer to~\cite{SingletaryGMSA22}. This is not a tight bound and therefore generates a conservative controller that only recovers a portion of the safe set. In the remainder of this section, we formulate a CBF that is directly differentiable and can recover the entire safe set.

\subsection{CBF Formulation}
\label{sec:cbf_formulation}
Inspired by~\cite{TracyHM22}, we construct a CBF using the minimum scaling of two convex objects under which they collide. The scaling $\alpha\in\mathbb{R}_+$ can be computed using a conic program
\begin{align}
\label{eq:differentiable_collision}
    \min_{p, \alpha}\ &\ \alpha\\
    \mathrm{subject\ to}\ &\ p \in \mathcal{P}_A(\alpha)\nonumber\\
                          &\ p \in \mathcal{P}_B(\alpha)\nonumber\\
                          &\ \alpha > 0\nonumber.
\end{align}
where $p \in \mathbb{R}^3$ represents a point and $P_A(\alpha)$ a set that contains the interior and surface of $A$ after scaling it uniformly using a scaling factor $\alpha$. $P_B(\alpha)$ is defined in the same manner. Denote the optimal value for $\alpha$ as $\alpha^\star$. If $\alpha^\star > 1$, then the two convex objects are not in collision. The two convex objects collide if $\alpha^\star \leq 1$. The optimal $p$ for~\eqref{eq:differentiable_collision}, denoted as $p^\star$, represents the point of intersection after scaling the two convex objects. A visual illustration of the solution to~\eqref{eq:differentiable_collision} can be found in Fig.~\ref{fig:concept_plot}. Then, we formulate the CBF as
\begin{equation}
    \mathbf{h}(x) = \alpha^\star(x) - \beta.
    \label{eq:diffopt_cbf}
\end{equation}
with $\beta\in\mathbb{R}$ and $\beta \geq 1$. Using Theorem~\ref{thm:implicit_function_theorem} and Eq.~\eqref{eq:ift_1}-\eqref{eq:ift_3}, we can obtain the Jacobian of $\alpha^\star$ as
\begin{equation}
    \frac{\partial\alpha^\star}{\partial(r_1, q_1, r_2, q_1)} = \frac{\partial\mathbf{h}}{\partial(r_1, q_1, r_2, q_1)}\in\mathbb{R}^{1 \times 14}
    \label{eq:solver_jacobian}
\end{equation}
where $r_1, r_2\in\mathbb{R}^3$ represents the positions of $A$ and $B$, $q_1, q_2\in\mathbb{R}^4$ represents the orientations in quaternions, and $\psi = (r_1, q_1, r_2, q_2) \in \Psi \subset \mathrm{SE}(3)\times\mathrm{SE}(3)$ represents joint pose of the two objects.

\begin{figure}[t!]
    \centering
    \includegraphics[width=0.4\textwidth]{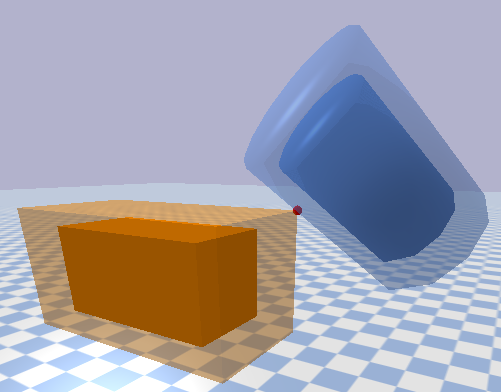}
    \caption{This figure illustrates the outcome of the differentiable conic program in~\eqref{eq:differentiable_collision}. The first convex object is the polygon and the second is the cylinder. The transparent regions surrounding the solid objects represent the scaled version of the objects after scaling them with a ratio of $\alpha$. The red dot represents the point of intersection $p$ of the two scaled objects.}
    \label{fig:concept_plot}
\end{figure}
To show~\eqref{eq:diffopt_cbf} is a valid CBF, we need to show continuous differentiability of $\alpha^\star$. In this paper, we prove continuity of $\alpha^\star$ in the general case and then prove continuous differentiability for the case of strongly convex scaling function when the gradients and Hessians exist and are continuous. We conjecture that continuous differentiability can be generalized to one object being strongly convex and the second being only convex although a formal proof is still elusive. 
% First, we state one of the main theoretical results.
% Theorem 1
\begin{thm}
\label{thm:main}
The optimal value of~\eqref{eq:differentiable_collision}, denoted as $\alpha^\star$, is a continuous function with respect to the positions and orientations of $A$ and $B$.
\end{thm}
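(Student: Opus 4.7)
The plan is to invoke Berge's Maximum Theorem (Theorem~\ref{thm:berge}) with decision variable $x = (p, \alpha) \in \mathbb{R}^3 \times \mathbb{R}_{>0}$ and parameter $\psi = (r_1, q_1, r_2, q_2) \in \mathrm{SE}(3) \times \mathrm{SE}(3)$. Rewriting~\eqref{eq:differentiable_collision} as $\sup_{(p,\alpha)}\bigl(-\alpha\bigr)$, the objective $\mathcal{J}(p,\alpha,\psi) = -\alpha$ is manifestly continuous, so continuity of $\mathcal{J}^\star(\psi) = -\alpha^\star(\psi)$ will follow once the feasibility correspondence
\[
\Gamma(\psi) = \bigl\{(p, \alpha) \mid p \in \mathcal{P}_A(\alpha;\psi) \cap \mathcal{P}_B(\alpha;\psi),\ \alpha > 0\bigr\}
\]
is shown to be nonempty, compact-valued, and both upper and lower hemicontinuous at each $\psi_0$.

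First, I would restrict the search to a compact box $\alpha \in [\epsilon, M]$ and $p$ in a large ball, arguing that this does not change $\alpha^\star$ in a neighborhood of $\psi_0$. Since $A$ and $B$ are bounded convex bodies that contain their own scaling centers, for all $\psi$ in a neighborhood of $\psi_0$ any $\alpha \geq M$ makes the scaled bodies overlap, so $\alpha^\star(\psi) \leq M$; choosing $\epsilon < \alpha^\star(\psi_0)$ leaves the minimizer unchanged. On this compact box, $\mathcal{P}_A(\alpha;\psi)$ and $\mathcal{P}_B(\alpha;\psi)$ are described by continuous convex inequalities $\phi_A(p,\alpha;\psi) \leq 0$ and $\phi_B(p,\alpha;\psi) \leq 0$, because uniform scaling and rigid motions act continuously on convex bodies. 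Consequently $\Gamma$ takes closed bounded, hence compact, values.

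Second, upper hemicontinuity then follows from the standard closed-graph argument: if $\psi_n \to \psi_0$ and $(p_n, \alpha_n) \in \Gamma(\psi_n)$ converges to $(p_0, \alpha_0)$, passing to the limit in $\phi_A \leq 0$ and $\phi_B \leq 0$ using joint continuity yields $(p_0, \alpha_0) \in \Gamma(\psi_0)$. For lower hemicontinuity, I would use a Slater-type perturbation: given any $(p_0, \alpha_0) \in \Gamma(\psi_0)$ and $\delta > 0$, the enlarged point $(p_0, \alpha_0 + \delta)$ lies in the strict interior of the feasible set at $\psi_0$, because enlarging the scaling factor sends any boundary point of $\mathcal{P}_A(\alpha_0;\psi_0)$ strictly into the interior of $\mathcal{P}_A(\alpha_0+\delta;\psi_0)$, and similarly for $B$. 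Continuity of $\phi_A, \phi_B$ in $\psi$ then keeps $(p_0, \alpha_0+\delta)$ feasible for all $\psi$ sufficiently near $\psi_0$; a diagonal sequence in $\delta \to 0^+$ produces feasible points at $\psi_n$ converging to $(p_0, \alpha_0)$.

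The main obstacle I anticipate is the geometric step at the core of lower hemicontinuity, namely verifying that $p_0 \in \partial\mathcal{P}_A(\alpha_0;\psi_0) \cap \partial\mathcal{P}_B(\alpha_0;\psi_0)$ implies $p_0 \in \mathrm{int}\,\mathcal{P}_A(\alpha_0+\delta;\psi_0) \cap \mathrm{int}\,\mathcal{P}_B(\alpha_0+\delta;\psi_0)$ for every $\delta > 0$. Intuitively this is clear, but a clean statement requires writing each scaled body explicitly as a sublevel set of a convex function that is strictly decreasing in $\alpha$ at interior points, which in turn relies on the convex bodies properly containing their scaling centers. With this geometric fact established, all hypotheses of Berge's theorem are verified, and continuity of $\alpha^\star$ in $\psi$ follows.
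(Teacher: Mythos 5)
Your proposal follows the same overall strategy as the paper---cast \eqref{eq:differentiable_collision} as a parametric optimization over the feasibility correspondence $\Gamma(\psi)$ and invoke Berge's Maximum Theorem (Theorem~\ref{thm:berge})---but you verify the hypotheses by a genuinely different, and more standard, route. The paper establishes compact-valuedness in Lemma~\ref{lem:compact_correspondence} by writing $\Gamma_A(\psi)$ as the continuous image of $K_A(\psi)\times S$ and simply postulates a compact set $S$ of admissible $\alpha$ in \eqref{eq:constraint_correspondence}; you instead justify the truncation $\alpha\in[\epsilon,M]$ by showing it does not alter the minimizer near $\psi_0$, which also delivers the nonemptiness of $\Gamma(\psi)$ that Berge's theorem requires and that the paper leaves implicit. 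For the hemicontinuity conditions, the paper argues directly from the open-set definitions via a finite subcover and an appeal to Lipschitz continuity of distances under ``infinitesimally small'' rigid transformations; you replace the UHC argument with the closed-graph characterization (valid here because the correspondence maps into a fixed compact box) and the LHC argument with a Slater-type interior perturbation $(p_0,\alpha_0+\delta)$ followed by diagonalization. Your LHC argument is the more careful of the two: it isolates the one genuinely geometric fact needed---that a point of $\mathcal{P}_A(\alpha_0)$ lies in the interior of $\mathcal{P}_A(\alpha_0+\delta)$, which holds precisely because each body contains its scaling center in its interior (consistent with $\min_p\mathcal{F}_A(p)=0<1$ in Definition~\ref{def:scaling_func})---whereas the paper's phrase ``after applying an infinitesimally small transformation the intersection is not empty'' asserts the conclusion of LHC rather than proving it. Two loose ends remain in your sketch, both minor and fixable: the choice $\epsilon<\alpha^\star(\psi_0)$ must be propagated to nearby $\psi$ (which follows a posteriori from continuity of the truncated value function, since the truncated value equals $\max(\alpha^\star(\psi),\epsilon)$ by monotonicity of $\alpha\mapsto\mathcal{P}_{AB}(\alpha)$), and the perturbation $\alpha_0+\delta$ can exceed $M$ when $\alpha_0=M$, which you can avoid by building slack into $M$ and approximating boundary points of $\mathcal{P}_{AB}(M;\psi_0)$ from its (nonempty) interior. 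Net effect: same theorem, same engine, but your verification of UHC/LHC is tighter than the paper's and makes explicit a hypothesis (the center lies in the interior of each body) that the paper uses silently.
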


To prove Theorem~\ref{thm:main}, we need  Lemma~\ref{lem:compact_correspondence} with the constraint correspondence $\Gamma(\psi)$ defined as
\begin{equation}
    \Gamma(\psi) = \{(p, \alpha)\mid p \in \mathcal{P}_{AB}(\alpha \mid \psi), \alpha \in S\}
    \label{eq:constraint_correspondence}
\end{equation}
with $S \subset \mathbb{R}$ being a compact set and $\mathcal{P}_{AB}(\alpha \mid \psi) = \mathcal{P}_{A}(\alpha \mid \psi) \cap \mathcal{P}_B(\alpha \mid \psi)$. The term $P_A(\alpha \mid \psi)$ denotes that $P_A$ depends on both $\alpha$ and $\psi$, where $\psi$ is seen as the configuration parameter. $\mathcal{P}_{AB}(\alpha \mid \psi)$ and $\mathcal{P}_B(\alpha \mid \psi)$ are defined in a similar manner as $\mathcal{P}_A(\alpha \mid \psi)$. Also we define $\Gamma_i(\psi) = \{(p,\alpha) \mid p\in\mathcal{P}_i(\alpha\mid\psi), \alpha \in S\}$ with $i \in \{A, B\}$. We can then show that $\Gamma(\psi)$ is a compact-valued correspondence on $\mathbb{R}^4$.

\begin{lem}
\label{lem:compact_correspondence}
The constraint correspondence $\Gamma(\psi)$ defined in~\eqref{eq:constraint_correspondence} is a compact-valued correspondence.
\end{lem}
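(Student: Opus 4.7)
The plan is to verify, for each fixed $\psi$, that $\Gamma(\psi) \subset \mathbb{R}^3 \times \mathbb{R}$ is both closed and bounded, so that compactness follows from Heine--Borel. The natural decomposition is
\[
\Gamma(\psi) \;=\; \Gamma_A(\psi) \,\cap\, \Gamma_B(\psi),
\]
so I would reduce the problem to showing that each $\Gamma_i(\psi)$ is closed and that at least one of them is bounded; the intersection of a closed set with a bounded set is compact when both are closed.

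Boundedness is the easy part. The $\alpha$-coordinate lies in the compact set $S$, and by the standing constraint $\alpha > 0$ in~\eqref{eq:differentiable_collision} I would take $S \subset (0,\infty)$ bounded away from zero. Since the convex body $A$ in the pose encoded by $\psi$ is compact and $\mathcal{P}_A(\alpha\mid\psi) = r_A + \alpha(A - r_A)$ is a continuous affine image of it, the union $\bigcup_{\alpha \in S} \mathcal{P}_A(\alpha\mid\psi)$ is bounded in $\mathbb{R}^3$. This bounds the $p$-coordinate of $\Gamma_A(\psi)$, hence of $\Gamma(\psi)$.

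For closedness of $\Gamma_A(\psi)$, I would represent the convex body in pose $\psi$ as a sublevel set $\{q : f_A^\psi(q) \leq 0\}$ of a continuous convex defining function, so that
\[
(p,\alpha) \in \Gamma_A(\psi) \iff \alpha \in S \ \text{and}\ f_A^\psi\bigl(r_A + (p - r_A)/\alpha\bigr) \leq 0.
\]
Because $\alpha$ is bounded away from zero on $S$, the map $(p,\alpha) \mapsto f_A^\psi\bigl(r_A + (p - r_A)/\alpha\bigr)$ is continuous on $\mathbb{R}^3 \times S$, and its $\leq 0$ sublevel set intersected with $\mathbb{R}^3 \times S$ is closed. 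The identical argument applies to $\Gamma_B(\psi)$, and the intersection then yields a closed, bounded, hence compact, subset of $\mathbb{R}^3 \times \mathbb{R}$.

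The main obstacle is the joint-closedness step just sketched: one must ensure that the scaling operation is continuous in $(p,\alpha)$ as a pair, which is why the strict positivity baked into $\alpha > 0$ in~\eqref{eq:differentiable_collision} (together with the choice $S \subset (0,\infty)$) is essential — otherwise the scaled body $\mathcal{P}_A(\alpha\mid\psi)$ would degenerate as $\alpha \to 0^+$. Note that compact-valuedness is a pointwise property of $\psi$, so I do not need to track dependence on $\psi$ nor nonemptiness at this stage; those questions enter only when one later upgrades to upper/lower hemicontinuity for the application of Berge's Maximum Theorem in the proof of Theorem~\ref{thm:main}.
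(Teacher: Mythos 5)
Your proof is correct, but it takes a genuinely different route from the paper's. The paper establishes compactness of each $\Gamma_i(\psi)$ directly as the continuous image of a compact set: it parameterizes the scaled body by the unscaled one, writing $\Gamma_A(\psi) = \phi\big(\mathcal{P}_A(1\mid\psi)\times S\big)$ with $\phi(p,\alpha)=(\alpha(p-r_1)+r_1,\alpha)$, and then intersects the two compact sets. You instead verify closedness and boundedness separately and invoke Heine--Borel, obtaining closedness from the sublevel-set representation $f_A^\psi\big(r_A+(p-r_A)/\alpha\big)\le 0$ via the inverse scaling map. The forward-image argument is slightly more economical: it requires no defining function for the bodies and remains valid for any compact $S$, including values of $\alpha$ at or near zero where the scaled body degenerates to a point but the image is still compact; your inverse-scaling map breaks down there, which is exactly why you must add the (reasonable, given the constraint $\alpha>0$ in~\eqref{eq:differentiable_collision}, but not stated in the lemma) hypothesis that $S\subset(0,\infty)$ is bounded away from zero. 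Conversely, your sublevel-set formulation anticipates the scaling functions the paper introduces later in Definition~\ref{def:scaling_func} and makes explicit where strict positivity of $\alpha$ matters. Both arguments correctly treat compact-valuedness as a pointwise property of $\psi$ and reduce to the fact that an intersection of compact subsets of $\mathbb{R}^4$ is compact.
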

\begin{proof}
First, we show that $\Gamma_A(\psi)$ is a compact-valued correspondence. Define $K_A(\psi) = \mathcal{P}_A(1\mid\psi) \subset \mathbb{R}^3$, and the function $\phi: K_A(\psi) \times S \rightarrow \mathbb{R}^4$ as $\phi(p, \alpha) = (\alpha(p - r_1) + r_1, \alpha)$, which is a continuous function on $K_A(\psi) \times S$. Because both $K_A(\psi)$ and $S$ are compact sets, we have $K_A(\psi) \times S$ as a compact set on $\mathbb{R}^4$. Since the image of $\psi$ under $\Gamma_A$ is the same as the image of $K_A(\psi) \times S$ under $\phi$, using the property that continuous functions map compact sets to compact sets, and $K_A(\psi)$ is a compact set for any value of $\psi$, we have $\Gamma_A(\psi)$ being a compact-valued correspondence. Using the same logic, for all $\psi$, $\Gamma_B(\psi)$ is also a compact-valued correspondence. Since $\Gamma(\psi) = \Gamma_A(\psi) \cap \Gamma_B(\psi)$, and intersections of compact sets are compact sets, we can conclude that $\Gamma(\psi)$ is a compact-valued correspondence.
\end{proof}

We can directly apply Theorem~\ref{thm:berge} to prove Theorem~\ref{thm:main} if we show that $\Gamma(\psi)$ is both UHC and LHC. 

\begin{proof}
\textbf{(Theorem~\ref{thm:main})} To show UHC, we consider an arbitrary open set $V$ such that  $\Gamma(\psi) \subset V$, i.e.,
\begin{equation}
    V = \Big\{(p, \alpha) \mid \mathcal{P}_{AB}(\alpha \mid \psi) \subset V_p(\alpha), S \subset V_\alpha\Big\}
\end{equation}
with $V_p$ being an open set on $\mathbb{R}^3$, $p \in \mathcal{P}_{AB}(\alpha \mid \psi)$, $V_\alpha$ being an open set on $\mathbb{R}$, and $\alpha \in S$. In our case, UHC is equivalent to: for any $\psi$ and $\alpha$, if the intersection of the two convex shapes $\mathcal{P}_A$ and $\mathcal{P}_B$ is contained within an open set $V_p(\alpha)$, then after applying an infinitesimally small translation and rotation, the intersection is still contained within $V_p(\alpha)$. Since $V_p(\alpha)$ is an open set, $\forall p \in \mathcal{P}_{AB}(\alpha \mid \psi)$ there exists an open neighborhood around $p$ that is a subset of $V_p(\alpha)$. Additionally, the compactness of $\mathcal{P}_{AB}(\alpha \mid \psi)$ implies that a finite subset of the union of those neighborhoods covers $\mathcal{P}_{AB}(\alpha \mid \psi)$. After applying an infinitesimally small transformation to $p$, it should stay in that finite cover because the change in distance between the origin and the point $p$ is Lipschitz continuous~\cite{GilbertJ85}. For LHC, we consider an arbitrary open set $\bar{V}$ such that $\Gamma(\psi) \cap \bar{V} \neq \emptyset$, i.e., with $p \in \mathcal{P}_{AB}(\alpha \mid \psi)$, $\bar{V}_p(\alpha)$ being an open set on $\mathbb{R}^3$, $\bar{V}_\alpha$ an open set on $\mathbb{R}$, and $\alpha \in S$, we have
\begin{equation}
    \bar{V} = \Big\{(p, \alpha) \mid \bar{V}_\alpha \cap S \neq \emptyset, \mathcal{P}_{AB}(\alpha \mid \psi) \cap \bar{V}_p(\alpha) \neq \emptyset\Big\}.
\end{equation}
In our case, LHC is equivalent to: for $\alpha \in \bar{V}_\alpha$, if $\mathcal{P}_{AB}(\alpha \mid \psi) \cap \bar{V}_p(\alpha) \neq \emptyset$, then after applying an infinitesimally small translation and rotation, the intersection set is not empty. Since there exists $p \in \mathcal{P}_{AB}(\alpha \mid \psi)$ with an open neighborhood that belongs to $\bar{V}_p(\alpha)$, following the logic for showing UHC, $\Gamma$ is also LHC. Thus, since $\Gamma(\psi)$ is a compact-valued correspondence, using Theorem~\ref{thm:berge}, we know $J^\star = \alpha^\star$ is a continuous function with respect to $\psi$.
\end{proof}
Next, under sufficient conditions, we show that $\alpha^\star$ is continuously differentiable with respect to $\psi$.
\begin{defn}[Scaling Function]
\label{def:scaling_func}
    We say that $\mathcal{F}_A: \mathbb{R}^3 \rightarrow \mathbb{R}$,  is a scaling function for an object $A \subset \mathbb{R}^3$, if it is a continuously differentiable convex function, $\mathcal{F}_A(p) \geq 0$, $A = \{p \mid \mathcal{F}_A(p) \leq 1\}$, $\partial A = \{p \mid \mathcal{F}_A(p) = 1\}$, and $\min_p\mathcal{F}_A(p) = 0$.
\end{defn}
\noindent For example, the scaling function for an ellipsoid $A$ has form $\mathcal{F}_A = (p - r)^\top\mathbf{P}(p - r)$ where $\mathbf{P} \in \mathbb{R}^3$ is a diagonal matrix with inverses of squared semi-axes lengths on its diagonal. This particular example is used in~\cite{TracyHM22} to implement the scaling constraint for the ellipsoid, and similar functions can be derived for other shapes discussed in~\cite{TracyHM22}.
\begin{lem}
\label{lem:scaling}
    For two objects $A, B \subset \mathbb{R}^3$, denote their scaling functions as $\mathcal{F}_A$ and $\mathcal{F}_B$, respectively. If $\mathcal{F}_A$ and $\mathcal{F}_B$ are differentiable functions, and no point $p$ exists such that $\partial\mathcal{F}_A/\partial p = \partial\mathcal{F}_B/\partial p = \mathbf{0}$, then for the optimal solution $(p^\star, \alpha^\star)$ of~\eqref{eq:differentiable_collision}, we have $\mathcal{F}_A(p^\star) = \mathcal{F}_B(p^\star) = \alpha^\star$ and $\nu_A^\star, \nu_B^\star \neq 0$.
\end{lem}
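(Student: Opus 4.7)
The plan is to apply the KKT conditions to the convex program~\eqref{eq:differentiable_collision}, interpreting the scaling constraints as $\mathcal{F}_A(p) \leq \alpha$ and $\mathcal{F}_B(p) \leq \alpha$. The objective is linear, the constraints are convex in $(p,\alpha)$, and Slater's condition holds trivially by taking any $p$ together with a sufficiently large $\alpha$, so the KKT conditions are both necessary and sufficient for optimality and there is no duality gap.

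The first substantive step is to establish $\alpha^\star > 0$, which is exactly where the gradient hypothesis enters. If instead $\alpha^\star = 0$, primal feasibility combined with $\mathcal{F}_A, \mathcal{F}_B \geq 0$ would force $\mathcal{F}_A(p^\star) = \mathcal{F}_B(p^\star) = 0$, so $p^\star$ would be a joint minimizer of both convex scaling functions; differentiability then gives $\nabla \mathcal{F}_A(p^\star) = \nabla \mathcal{F}_B(p^\star) = 0$, contradicting the standing hypothesis. With $\alpha^\star > 0$ established, the $\alpha > 0$ constraint is inactive and can be dropped from the KKT system.

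I would then form the Lagrangian $\mathcal{L} = \alpha + \nu_A(\mathcal{F}_A(p) - \alpha) + \nu_B(\mathcal{F}_B(p) - \alpha)$ with $\nu_A, \nu_B \geq 0$. Stationarity in $\alpha$ gives $\nu_A + \nu_B = 1$, stationarity in $p$ gives $\nu_A \nabla \mathcal{F}_A(p^\star) + \nu_B \nabla \mathcal{F}_B(p^\star) = 0$, and complementary slackness supplies $\nu_A(\mathcal{F}_A(p^\star) - \alpha^\star) = \nu_B(\mathcal{F}_B(p^\star) - \alpha^\star) = 0$. In particular, at least one of $\nu_A, \nu_B$ is strictly positive.

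The main step is a contradiction argument ruling out $\nu_A = 0$. If $\nu_A = 0$, then $\nu_B = 1$ and the $p$-stationarity condition collapses to $\nabla \mathcal{F}_B(p^\star) = 0$; since $\mathcal{F}_B$ is convex with minimum value $0$, this forces $\mathcal{F}_B(p^\star) = 0$, and complementary slackness with $\nu_B = 1 > 0$ then gives $\alpha^\star = \mathcal{F}_B(p^\star) = 0$, contradicting $\alpha^\star > 0$. A symmetric argument rules out $\nu_B = 0$. With both multipliers strictly positive, complementary slackness immediately yields $\mathcal{F}_A(p^\star) = \mathcal{F}_B(p^\star) = \alpha^\star$, which is the desired conclusion. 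The only real subtlety is recognizing the dual role of the no-shared-critical-point hypothesis: it is what both guarantees $\alpha^\star > 0$ at the start and then feeds back into the contradiction that forces both $\nu_A^\star$ and $\nu_B^\star$ to be nonzero; without it, two shapes sharing a common gradient-zero point could genuinely admit $\alpha^\star = 0$ and a single-active-multiplier optimum.
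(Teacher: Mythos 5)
Your proposal is correct and follows essentially the same route as the paper: both reformulate the program with constraints $\mathcal{F}_A(p)\leq\alpha$, $\mathcal{F}_B(p)\leq\alpha$, write the KKT stationarity and complementary-slackness conditions, and rule out $\nu_A^\star=0$ (and symmetrically $\nu_B^\star=0$) by a contradiction that ultimately invokes the no-common-critical-point hypothesis. The only difference is bookkeeping --- you factor out $\alpha^\star>0$ as a preliminary step and terminate the multiplier contradiction there, whereas the paper runs the full chain $\alpha^\star=0\Rightarrow\mathcal{F}_A(p^\star)=0\Rightarrow\partial\mathcal{F}_A(p^\star)/\partial p=\mathbf{0}$ inside the $\nu_A^\star=0$ case; your explicit mention of Slater's condition is a small but welcome addition the paper leaves implicit.
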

\begin{proof}
    First, we rewrite the problem in~\eqref{eq:differentiable_collision} as
    \begin{align}
    \label{eq:scaling_opt}
        \min_{p, \alpha}\ &\ \alpha\\
        \mathrm{subject\ to}\ &\ \mathcal{F}_A(p) \leq \alpha\nonumber\\
                              &\ \mathcal{F}_B(p) \leq \alpha\nonumber
    \end{align}
    We can write the Lagrangian for~\eqref{eq:scaling_opt} as
    \begin{equation}
        \mathcal{L}(\alpha, p, \nu_A, \nu_B) = \alpha + \nu_A(\mathcal{F}_A(p) - \alpha) + \nu_B(\mathcal{F}_B(p) - \alpha)
    \end{equation}
    From the stationarity condition of the KKT conditions we have
    \begingroup
    \allowdisplaybreaks
    \begin{subequations}
    \begin{align}
        \label{eq:lem_kkt_1}
        \frac{\partial\mathcal{L}}{\partial\alpha} &= 1 - \nu_A^\star - \nu_B^\star = 0\\
        \label{eq:lem_kkt_2}
        \frac{\partial\mathcal{L}}{\partial p} &= \nu_A^\star\frac{\partial\mathcal{F}_A}{\partial p}(p^\star) + \nu_B^\star\frac{\partial\mathcal{F}_B}{\partial p} = 0
    \end{align}
    \end{subequations}
    \endgroup
    From the complementary slackness condition we have
    \begin{subequations}
    \begin{align}
        \label{eq:lem_kkt_3}
        \nu_A^\star(\mathcal{F}_A(p^\star) - \alpha^\star) &= 0\\
        \label{eq:lem_kkt_4}
        \nu_B^\star(\mathcal{F}_B(p^\star) - \alpha^\star) &= 0.
    \end{align}
    \end{subequations}
    It can be shown that neither $\nu_A^\star$ nor $\nu_B^\star$ can be 0. To show this, without loss of generality, assume $\nu_A^\star = 0$. From~\eqref{eq:lem_kkt_1}, we have $\nu_B^\star = 1$ and from~\eqref{eq:lem_kkt_2} we have $\nabla_p\mathcal{F}_B(p^\star) = 0$. From definition~\ref{def:scaling_func}, we know that $\mathcal{F}_B$ is a convex differentiable function, therefore, $p^\star$ is the global minimum of $\mathcal{F}_B$. Hence, also from definition~\ref{def:scaling_func}, $\mathcal{F}_B(p^\star) = 0$. Then, from~\eqref{eq:lem_kkt_4}, $\alpha^{\star} = \mathcal{F}_B(p^\star) = 0$. Using the constraint $\mathcal{F}_A(p) \leq \alpha$ from~\eqref{eq:scaling_opt} and the condition $\mathcal{F}_A(p) \geq 0$ from definition~\eqref{def:scaling_func}, we have $\mathcal{F}_A(p^\star) = 0$, which means $p^\star$ is also a global minimizer for $\mathcal{F}_A$, i.e., $\partial\mathcal{F}_A(p^\star)/\partial p = \mathbf{0}$. Since, by the assumption of Lemma~\ref{lem:scaling}, no point $p$ exists such that $\partial\mathcal{F}_A/\partial p = \partial\mathcal{F}_B/\partial p = \mathbf{0}$, thus, $\nu_A^\star \neq 0$. Given the symmetry between $\nu_A^\star$ and $\nu_B^\star$, $\nu_B^\star \neq 0$. Therefore, from~\eqref{eq:lem_kkt_3} and~\eqref{eq:lem_kkt_4}, we have $\mathcal{F}_A(p^\star) = \mathcal{F}_B(p^\star) = \alpha$.
\end{proof}

\begin{thm}
\label{thm:cont_diff}
    Consider two parameterized scaling functions $\mathcal{G}_A(p,\psi) = \mathcal{F}_{A(\psi)}(p)$ and $\mathcal{G}_B(p,\psi) = \mathcal{F}_{B(\psi)}(p)$ such that, for any $\psi \in \Psi$, these are scaling functions  for varying sets $A(\psi), B(\psi) \subset \mathbb{R}^3$. Furthermore, for all $p$ and $\psi$, assume that the conditions of Lemma~\ref{lem:scaling} holds, $\mathcal{G}_A$ and $\mathcal{G}_B$ are strongly convex on $p$, the gradients $\partial\mathcal{G}_A/\partial\psi$ and $\partial\mathcal{G}_B/\partial\psi$ and the Hessians $\partial^2\mathcal{G}_A/\partial p^2$, $\partial^2\mathcal{G}_A/\partial p\partial\psi$, $\partial^2\mathcal{G}_B/\partial p^2$, and $\partial^2\mathcal{G}_B/\partial p\partial\psi$ are continuous in $\psi$, then, $\alpha^\star(\psi)$ is continuously differentiable w.r.t. $\psi$.
\end{thm}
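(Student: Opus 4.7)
The plan is to apply the implicit function theorem (Theorem~\ref{thm:implicit_function_theorem}) to the KKT system of~\eqref{eq:scaling_opt}. First, I would invoke Lemma~\ref{lem:scaling} to eliminate the complementary-slackness disjunction: under its hypotheses, both inequality constraints are active at the optimum and the multipliers satisfy $\nu_A^\star, \nu_B^\star > 0$ (nonnegativity from duality, nonvanishing from the Lemma), which combined with the $\alpha$-stationarity equation gives $\nu_A^\star + \nu_B^\star = 1$. Moreover $\alpha^\star > 0$, so the $\alpha > 0$ bound is inactive as well. Consequently $z^\star = (p^\star, \alpha^\star, \nu_A^\star, \nu_B^\star) \in \mathbb{R}^6$ solves the smooth square system $g(z^\star, \psi) = 0$ whose components are $1 - \nu_A - \nu_B$, $\nu_A\nabla_p\mathcal{G}_A + \nu_B\nabla_p\mathcal{G}_B$, $\mathcal{G}_A - \alpha$, and $\mathcal{G}_B - \alpha$. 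The assumed continuity of the gradients and Hessians of $\mathcal{G}_{A,B}$ immediately gives $g \in C^1$ in $(z, \psi)$, supplying one hypothesis of Theorem~\ref{thm:implicit_function_theorem}.

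The main obstacle is verifying invertibility of $\partial_{z^\star} g$ at the optimum. Let $H = \nu_A^\star\,\partial^2\mathcal{G}_A/\partial p^2 + \nu_B^\star\,\partial^2\mathcal{G}_B/\partial p^2$, which is positive definite by strong convexity and positivity of the multipliers, and abbreviate $a = \nabla_p\mathcal{G}_A(p^\star,\psi)$, $b = \nabla_p\mathcal{G}_B(p^\star,\psi)$. For an arbitrary null vector $(\delta p, \delta\alpha, \delta\nu_A, \delta\nu_B)$, the $\alpha$-stationarity row yields $\delta\nu_B = -\delta\nu_A$, and the two active-constraint rows yield $a^\top\delta p = b^\top\delta p = \delta\alpha$, whence $(a-b)^\top\delta p = 0$. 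Pairing the $p$-stationarity block $H\delta p + \delta\nu_A(a-b) = 0$ with $\delta p^\top$ annihilates the cross term and leaves $\delta p^\top H \delta p = 0$, forcing $\delta p = 0$ and then $\delta\alpha = 0$. The residual equation $\delta\nu_A(a-b) = 0$ closes once I argue $a \neq b$: stationarity gives $\nu_A^\star a + \nu_B^\star b = 0$ with $\nu_A^\star + \nu_B^\star = 1$, so $a = b$ would force $a = b = 0$, contradicting the hypothesis inherited from Lemma~\ref{lem:scaling}. Hence $\delta\nu_A = \delta\nu_B = 0$.

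With $g \in C^1$ and $\partial_{z^\star} g$ invertible at every KKT point $(z^\star(\psi_0), \psi_0)$, Theorem~\ref{thm:implicit_function_theorem} produces a $C^1$ local branch $\psi \mapsto z^\star(\psi)$ near each $\psi_0 \in \Psi$. Since the scaling program is convex, this branch coincides with the global optimum (which is moreover unique by the strong convexity of $\mathcal{G}_{A,B}$ on $p$), so projecting onto the $\alpha$ coordinate gives continuous differentiability of $\alpha^\star$ in a neighborhood of $\psi_0$; as $\psi_0$ is arbitrary, $\alpha^\star(\psi)$ is $C^1$ on all of $\Psi$, as claimed.
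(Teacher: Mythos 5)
Your proposal is correct and follows essentially the same route as the paper: differentiate the KKT system of \eqref{eq:scaling_opt}, use Lemma~\ref{lem:scaling} to fix strict complementarity (both constraints active, $\nu_A^\star,\nu_B^\star>0$), and show the resulting KKT Jacobian is invertible from strong convexity ($\mathbf{M}\succ 0$) together with $\partial\mathcal{G}_A/\partial p\neq\partial\mathcal{G}_B/\partial p$ at the optimum. The only differences are presentational: you keep the full $(p,\alpha,\nu_A,\nu_B)$ system and prove invertibility by a kernel argument where the paper first eliminates $\alpha$ and $\nu_B$ and applies Schur's determinant formula to the bordered matrix $\mathbf{N}$, and you are somewhat more explicit than the paper in invoking Theorem~\ref{thm:implicit_function_theorem} and in identifying the local $C^1$ branch with the unique global minimizer, steps the paper leaves implicit when it formally differentiates the stationarity conditions.
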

\begin{proof}
    From~\eqref{eq:lem_kkt_1}, we have
        $\partial\nu_A^\star/\partial\psi + \partial\nu_B^\star/\partial\psi = 0$.
    Then, using Lemma~\ref{lem:scaling}, and taking the partial derivative with respect to $\psi$ on both sides of~\eqref{eq:lem_kkt_1} and~\eqref{eq:lem_kkt_2}, we obtain
    \begin{equation}
    \label{eq:linear_equations}
        \underbrace{\begin{bmatrix}
            \mathbf{M} & \mathbf{c}\\
            \mathbf{c}^\top & \mathbf{0}
        \end{bmatrix}}_{\mathbf{N}}\begin{bmatrix}
            \displaystyle\Big(\frac{\partial p^\star}{\partial\psi}\Big)^\top & \displaystyle\Big(\frac{\partial\nu_A^\star}{\partial\psi}\Big)^\top
        \end{bmatrix}^\top = \begin{bmatrix}
            \Omega_1\\
            \Omega_2
        \end{bmatrix}
    \end{equation}
    with
    \begin{subequations}
    \begin{align}
        \label{eq:thm_eq_1}
        \mathbf{M} &= \frac{\partial^2\mathcal{G}_A}{\partial p^2}(p^\star, \psi)\nu_A^\star + \frac{\partial^2\mathcal{G}_B}{\partial p^2}(p^\star, \psi)\nu_B^\star\\
        \label{eq:thm_eq_2}
        \mathbf{c} &= \frac{\partial\mathcal{G}_A}{\partial p}(p^\star, \psi) - \frac{\partial\mathcal{G}_B}{\partial p}(p^\star, \psi)\\
        \label{eq:thm_eq_3}
        \Omega_1 &= -\frac{\partial^2\mathcal{G}_A}{\partial p\partial\psi}(p^\star, \psi)\nu_A^\star - \frac{\partial^2\mathcal{G}_B}{\partial p\partial\psi}(p^\star, \psi)\nu_B^\star\\
        \label{eq:thm_eq_4}
        \Omega_2 &= \frac{\partial\mathcal{G}_B}{\partial\psi}(p^\star, \psi) - \frac{\partial\mathcal{G}_A}{\partial\psi}(p^\star, \psi).
    \end{align}
    \end{subequations}
    Since $\mathcal{G}_A$ and $\mathcal{G}_B$ are strongly convex and $\nu_A^\star$ and $\nu_B^\star$ are dual feasible and thus strictly positive from Lemma~\ref{lem:scaling}, $\mathbf{M} \succ 0$. Note that $\mathbf{c} = \mathbf{0}$ would imply from \eqref{eq:lem_kkt_1} and \eqref{eq:lem_kkt_2} that $\partial\mathcal{G}_A(p^\star, \psi)/ \partial p = \partial\mathcal{G}_B(p^\star, \psi)/ \partial p = 0$, violating the condition of Lemma~\ref{lem:scaling}. Hence, $\mathbf{c} \neq \mathbf{0}$. From Schur's formula, we know that
    \begin{equation}
        \det(\mathbf{N}) = \det(\mathbf{M})\det(-\mathbf{c^\top M^{-1}c}).
    \end{equation}
    Since $\mathbf{M} \succ 0$ and $\mathbf{c} \neq \mathbf{0}$, we can conclude that $\det(\mathbf{N}) \neq 0$, i.e., $\mathbf{N}$ is invertible. Since, $\mathbf{M}$, $\mathbf{c}$, $\Omega_1$, and $\Omega_2$ are continuous in $\psi$, $\mathbf{N}^{-1}$ is continuous on $\psi$, thus, $\partial p^\star/\partial\psi$ is also continuous in $\psi$. Then, from Lemma~\ref{lem:scaling}, we have
    \begin{equation}
        \alpha^\star = \mathcal{G}_A(p^\star, \psi).
    \end{equation}
    Taking the derivative w.r.t. $\psi$ on both sides of the equation yields
    \begin{equation}
        \frac{\partial\alpha^\star}{\partial\psi}(\psi) = \frac{\partial\mathcal{G}_A}{\partial p}(p^\star, \psi)\frac{\partial p^\star}{\partial\psi}(\psi) + \frac{\partial\mathcal{G}_A}{\partial\psi}(p^\star, \psi)
    \end{equation}
    which, following the above derivation, is continuous.
\end{proof}
\begin{rem}
    For ellipsoids/spheres, we can write a scaling function (as noted after Definition~\ref{def:scaling_func}) that satisfies Theorem~\ref{thm:cont_diff}.
\end{rem}

From empirical studies, we believe that continuous differentiability holds under more general conditions motivating Conjecture~\ref{conj:cbf_condition}.

\begin{conj}
\label{conj:cbf_condition}
Assume we have two convex objects $A$ and $B$. If $A$ is a strongly convex object (e.g., ellipsoid, sphere) and $B$ is a convex object that can either have a non-smooth surface (e.g., cylinder, polygon, cone) or a smooth surface, then the CBF defined in~\eqref{eq:diffopt_cbf} is continuously differentiable.
\end{conj}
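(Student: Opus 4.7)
The plan is to push the argument of Theorem~\ref{thm:cont_diff} through with the weaker assumption that only $\mathcal{G}_A$ (not $\mathcal{G}_B$) is strongly convex. The key reformulation is $\alpha^\star(\psi) = \min_p \max\bigl(\mathcal{G}_A(p,\psi),\mathcal{G}_B(p,\psi)\bigr)$. Since the pointwise maximum of a strongly convex function and a convex function is strongly convex in $p$, the minimizer $p^\star(\psi)$ is unique, and combining this uniqueness with Berge's theorem (applied as in the proof of Theorem~\ref{thm:main}) yields continuity of $p^\star$ in $\psi$.

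The next step is a subgradient version of Lemma~\ref{lem:scaling}. Because $\mathcal{G}_B$ may fail to be differentiable at $p^\star$, the stationarity condition becomes $0 \in \nu_A^\star\nabla_p\mathcal{G}_A(p^\star,\psi) + \nu_B^\star\,\partial_p\mathcal{G}_B(p^\star,\psi)$ together with $\nu_A^\star+\nu_B^\star=1$. The same non-degeneracy argument as in Lemma~\ref{lem:scaling} still forces $\nu_A^\star,\nu_B^\star>0$: otherwise $p^\star$ would be a common minimizer of $\mathcal{G}_A$ and $\mathcal{G}_B$, contradicting the assumed absence of joint critical points. This in turn constrains the selected subgradient $g^\star\in\partial_p\mathcal{G}_B(p^\star,\psi)$ to lie on the ray of negative multiples of $\nabla_p\mathcal{G}_A(p^\star,\psi)$, from which $(\nu_A^\star,\nu_B^\star)$ are recovered through $\nu_A^\star\|\nabla_p\mathcal{G}_A\|=\nu_B^\star\|g^\star\|$.

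Once $p^\star$, $\nu_A^\star$, $\nu_B^\star$, and $g^\star$ are shown to vary continuously in $\psi$, an envelope identity delivers the gradient: using the subgradient version of Lemma~\ref{lem:scaling}, the optimal Lagrangian equals $\alpha^\star(\psi)=\nu_A^\star\mathcal{G}_A(p^\star,\psi)+\nu_B^\star\mathcal{G}_B(p^\star,\psi)$, and since $p$-stationarity kills the implicit dependence through $p^\star$, standard sensitivity analysis yields
\begin{equation*}
    \frac{\partial\alpha^\star}{\partial\psi}(\psi) = \nu_A^\star\,\frac{\partial\mathcal{G}_A}{\partial\psi}(p^\star,\psi) + \nu_B^\star\,\frac{\partial\mathcal{G}_B}{\partial\psi}(p^\star,\psi),
\end{equation*}
which only requires differentiability of $\mathcal{G}_B$ in $\psi$ (assumed by hypothesis), not in $p$. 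Continuity of every factor on the right then gives $\alpha^\star\in C^1(\Psi)$, and hence $\mathbf{h}=\alpha^\star-\beta$ inherits the same regularity.

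The main obstacle is precisely the uniqueness and continuity of the subgradient selection when $p^\star$ sits on a non-smooth feature of $\partial B$ (a cylinder rim, a polytope edge, a cone apex). There $\partial_p\mathcal{G}_B(p^\star,\psi)$ is a nontrivial convex set, and the negative ray of $\nabla_p\mathcal{G}_A(p^\star,\psi)$ may in exceptional configurations meet it in more than a single point, jeopardizing the uniqueness of $(\nu_A^\star,\nu_B^\star)$. A fallback I would pursue in parallel is a smoothing argument: replace $\mathcal{G}_B$ by a family of strongly convex $C^2$ approximations $\mathcal{G}_B^\varepsilon$ (for instance, Moreau envelopes patched across the piecewise-smooth faces of $B$), apply Theorem~\ref{thm:cont_diff} to $\mathcal{G}_A$ and $\mathcal{G}_B^\varepsilon$ to obtain $\alpha^\star_\varepsilon\in C^1$, and then establish local uniform convergence of both $\alpha^\star_\varepsilon$ and $\partial\alpha^\star_\varepsilon/\partial\psi$ as $\varepsilon\to 0$ using the Lipschitz stability of $p^\star$ guaranteed by strong convexity of $\mathcal{G}_A$. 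Showing that the limiting gradient is continuous (rather than merely measurable) is the anticipated bottleneck, and I expect a clean resolution may need a mild transversality hypothesis characterizing exactly when the ray of $\nabla_p\mathcal{G}_A$ cuts $\partial_p\mathcal{G}_B$ in a single point.
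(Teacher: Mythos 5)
This statement is Conjecture~\ref{conj:cbf_condition}: the paper deliberately leaves it unproved, stating that ``a formal proof is still elusive'' and offering only empirical evidence (the plotted partial derivatives in Fig.~\ref{fig:jacobians_of_cbf_paper}). So there is no proof in the paper to compare against; the question is whether your sketch actually closes the gap. It does not, and you largely say so yourself. The decisive obstruction is exactly the one you flag at the end: when $p^\star$ lies on a non-smooth feature of $B$, the stationarity condition $0\in\nu_A^\star\nabla_p\mathcal{G}_A+\nu_B^\star\partial_p\mathcal{G}_B$ determines $(\nu_A^\star,\nu_B^\star)$ only up to where the ray $\{-t\nabla_p\mathcal{G}_A : t>0\}$ meets the convex set $\partial_p\mathcal{G}_B(p^\star,\psi)$, and for polytopal or conical $B$ that intersection can be a whole segment, so the multipliers are genuinely non-unique. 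Since your envelope identity $\partial_\psi\alpha^\star=\nu_A^\star\partial_\psi\mathcal{G}_A+\nu_B^\star\partial_\psi\mathcal{G}_B$ takes the multipliers as coefficients, non-uniqueness there is not a technical nuisance but precisely the scenario in which $\alpha^\star$ may fail to be differentiable (the value function then has a nontrivial subdifferential spanned by the extreme multiplier pairs). Your smoothing fallback would need to show the limiting derivative is independent of the approximation and continuous, which is again the same issue in different clothing. Neither route is completed, so the conjecture remains open.

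There is also one outright false step earlier in the sketch: the pointwise maximum of a strongly convex function and a merely convex function is \emph{not} strongly convex in general (take $\mathcal{G}_A(p)=\|p\|^2$ and $\mathcal{G}_B\equiv c$ on a region where $\mathcal{G}_B>\mathcal{G}_A$; the max is locally constant there). So uniqueness of $p^\star$ does not follow from your stated reason. The conclusion can likely be rescued differently: by (a subgradient version of) Lemma~\ref{lem:scaling} both constraints are active at the optimum, the sublevel set $\{p:\mathcal{G}_A(p,\psi)\le\alpha^\star\}$ is strictly convex because $A$ is strongly convex, and a strictly convex body meeting a convex body at the minimal common scaling touches it in a single point. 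That repair is worth making explicit, but it does not affect the main verdict: the reformulation $\alpha^\star=\min_p\max(\mathcal{G}_A,\mathcal{G}_B)$, the subgradient KKT analysis, and the identification of the multiplier-selection problem are all genuine progress beyond what the paper offers, yet the statement is still a conjecture after your argument, not a theorem.
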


\subsection{CBF Constraint}
\label{sec:cbf_constraint}
We consider the CBF formulation in Section~\ref{sec:cbf_formulation} using velocity control. For robotic systems with dynamics~\eqref{eq:control_affine_sys}, it was shown in~\cite{MolnarCSUA22} that a derivative controller with gravity compensation could realize input-to-state safety (ISSf) when tracking a planned safe velocity command. Similar to~\cite{MolnarCSUA22}, by tuning the value of $\beta$ in~\eqref{eq:diffopt_cbf}, we can achieve safety with respect to the actual safe set, even when the velocity command is not perfectly tracked. For velocity control, the system dynamics have the form of $\dot{x} = G(x)u$, with $n_j$ denoting the number of generalized coordinates and $n = m = n_j$. The above equation is the same as~\eqref{eq:control_affine_sys} when $F(x) = 0$. For the CBF constraint, we need to compute the value of $\dot{\mathbf{h}}(x)$. For a single rigid body, the chain rule yields
\begin{equation}
    \dot{\mathbf{h}}(x) = \frac{\partial\mathbf{h}}{\partial\mu}\frac{\partial\mu}{\partial x}\dot{ x}.
\end{equation}
where $\mu = [r^\top, q^\top]^\top\in\mathbb{R}^7$ represents the rigid body's position and orientation. The first term is obtained from~\eqref{eq:solver_jacobian}. The second term can be separated into two parts: the positional part and the orientational part. For the positional part, $\partial r / \partial x = J_v(x)$, where $J_v: \mathbb{R}^{n_j} \rightarrow \mathbb{R}^{3 \times n_j}$ is the positional Jacobian matrix. The orientational Jacobian $J_\omega$ has the relationship $\omega = J_\omega(x)\dot{x}$, where $\omega\in\mathbb{R}^3$ represents the frame angular velocity and $J_\omega: \mathbb{R}^{n_j} \rightarrow \mathbb{R}^{3 \times n_j}$. Define the (vectorized) quaternion as $q = \begin{bmatrix} 
        q_w & q_x & q_y & q_z
\end{bmatrix}^\top \in \mathbb{R}^4$, which satisfies
\begin{align}
    \dot{q} &= \frac{1}{2}\mathbf{Q}\omega = \frac{1}{2}\mathbf{Q}J_\omega(x)\dot{x}\\
    \mathbf{Q} &= \begin{bmatrix}
        -q_x & -q_y & -q_z\\
        q_w & -q_z & q_y\\
        q_z & q_w & -q_x\\
        -q_y & q_x & q_w
    \end{bmatrix}\in\mathbb{R}^{4\times3}.
\end{align}
Then, we have
\begin{equation}
    \frac{\partial\mu}{\partial x} = \begin{bmatrix}
        \displaystyle\frac{\partial r}{\partial x}\vspace{0.5em}\\
        \displaystyle\frac{\partial q}{\partial x}
    \end{bmatrix} = \begin{bmatrix}
        J_v(x)\vspace{0.25em}\\
        \displaystyle\frac{1}{2}\mathbf{Q}J_\omega(x)
    \end{bmatrix}\in\mathbb{R}^{7 \times n_j}.
\end{equation}
Finally, we have the CBF constraint as
\begin{equation}
    \frac{\partial\mathbf{h}}{\partial x}G(x)u = \frac{\partial\mathbf{h}}{\partial\mu}\frac{\partial\mu}{\partial x}G(x)u \geq -\gamma\mathbf{h}(x).
    \label{eq:final_cbf_constraint}
\end{equation}
with $\gamma\in\mathbb{R}_+$. In the CBF constraint, we use $\Lambda(a) = \gamma a$.

\begin{rem}
Since the Jacobians are continuous functions, the partial derivative of $\alpha$ w.r.t. the general coordinates of the robot is a continuous function on $\mathcal{D}$ if the partial derivative of $\alpha$ w.r.t. $(r, q)$ is a continuous function on $\mathcal{D}$.
\end{rem}

In the remainder of this section, we show conditions that guarantee $\partial\mathbf{h}/\partial x \neq 0$ for $x\in\partial\mathcal{C}$, which guarantees the validity of the CBF.

% Theorem 5
\begin{lem}
\label{lem:cbf_constraint}
For $x\in\partial\mathcal{C}$, if the Jacobian matrix $J(x) = \begin{bmatrix}
        J_v^\top(x) & J_\omega^\top(x)
\end{bmatrix}^\top$ has full row rank, then $\partial\mathbf{h}/\partial x \neq \mathbf{0}$.
\end{lem}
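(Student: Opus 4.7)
The plan is to use the chain rule decomposition
\[
\frac{\partial\mathbf{h}}{\partial x} = \frac{\partial\alpha^\star}{\partial r}\, J_v(x) + \frac{1}{2}\frac{\partial\alpha^\star}{\partial q}\,\mathbf{Q}\, J_\omega(x)
\]
from the preceding derivation, and to show this $\mathbb{R}^{1 \times n_j}$ row vector is nonzero by exhibiting a configuration velocity $\dot{x}$ on which its directional derivative is nonzero. Because $J(x)$ has full row rank, the linear map $\dot{x} \mapsto (J_v\dot{x}, J_\omega\dot{x})$ is surjective onto $\mathbb{R}^6$, so for any linear velocity $v \in \mathbb{R}^3$ there exists $\dot{x}$ with $J_v\dot{x} = v$ and $J_\omega\dot{x} = \mathbf{0}$. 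On such a $\dot{x}$ the chain rule collapses to $(\partial\mathbf{h}/\partial x)\dot{x} = (\partial\alpha^\star/\partial r)\,v$, so the proof reduces to showing $\partial\alpha^\star/\partial r \neq \mathbf{0}$ at every $x \in \partial\mathcal{C}$; picking $v = (\partial\alpha^\star/\partial r)^\top$ would then complete the argument.

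To establish $\partial\alpha^\star/\partial r \neq \mathbf{0}$, I would apply an envelope-type argument to~\eqref{eq:differentiable_collision}. Because complementary slackness makes $\alpha^\star$ equal the Lagrangian at the optimum, differentiating in $\psi$ and invoking the KKT stationarity conditions~\eqref{eq:lem_kkt_1}--\eqref{eq:lem_kkt_2} cancels all terms involving $\partial p^\star/\partial\psi$, $\partial\alpha^\star/\partial\psi$, $\partial\nu_A^\star/\partial\psi$, and $\partial\nu_B^\star/\partial\psi$, leaving
\[
\frac{\partial\alpha^\star}{\partial r_1} = \nu_A^\star\,\frac{\partial\mathcal{G}_A}{\partial r_1}(p^\star, \psi) + \nu_B^\star\,\frac{\partial\mathcal{G}_B}{\partial r_1}(p^\star, \psi).
\]
Since $\mathcal{G}_B$ does not depend on $r_1$ and the canonical scaling functions considered in the paper satisfy the rigid-body-translation identity $\partial\mathcal{G}_A/\partial r_1 = -\partial\mathcal{G}_A/\partial p$, this simplifies to $-\nu_A^\star\,\nabla_p\mathcal{G}_A(p^\star)$. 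Lemma~\ref{lem:scaling} guarantees $\nu_A^\star > 0$ and $\nabla_p\mathcal{G}_A(p^\star) \neq \mathbf{0}$: if the latter vanished, $p^\star$ would be the unconstrained minimum of $\mathcal{G}_A$, forcing $\alpha^\star = 0$ and contradicting $\alpha^\star = \beta \geq 1$ on $\partial\mathcal{C}$. Hence $\partial\alpha^\star/\partial r_1 \neq \mathbf{0}$, and the chain-rule calculation above finishes the proof.

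The main obstacle is justifying the envelope step cleanly, because the parameter $\psi$ enters~\eqref{eq:differentiable_collision} only through the constraints and not through the objective. Carrying this out rigorously requires either invoking the Theorem~\ref{thm:cont_diff} construction (which already produced continuous $\partial p^\star/\partial\psi$ and $\partial\nu_A^\star/\partial\psi$ via the linear system in $\mathbf{N}$) or directly differentiating both sides of $\alpha^\star = \mathcal{G}_A(p^\star(\psi),\psi)$ and using stationarity in $p$ to kill the $\partial p^\star/\partial\psi$ contribution. A secondary subtlety is the treatment of the rotational component of $\partial\mu/\partial x$, in particular the fact that $q^\top\mathbf{Q} = \mathbf{0}$ makes $\mathbf{Q}$ only rank~$3$; this is sidestepped entirely by restricting the test direction to $\omega = \mathbf{0}$.
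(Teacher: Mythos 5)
Your proof is correct, and its outer skeleton matches the paper's: both factor $\partial\mathbf{h}/\partial x$ through the pose Jacobian and use the full-row-rank hypothesis to reduce the claim to the nonvanishing of the gradient of $\mathbf{h}$ with respect to a pose perturbation. The difference lies in how that reduced claim is handled. The paper simply asserts it: ``it is evident\ldots since the distance between two convex objects can always be modified by some combination of translations and rotations,'' i.e., it takes $(\partial\mathbf{h}/\partial\mu)\mathbf{A}_\mathbf{Q}\neq 0$ as geometrically obvious. You instead prove a sharper statement by an envelope computation: combining $\alpha^\star=\mathcal{G}_A(p^\star,\psi)=\mathcal{G}_B(p^\star,\psi)$ with the stationarity conditions \eqref{eq:lem_kkt_1}--\eqref{eq:lem_kkt_2} to cancel the $\partial p^\star/\partial\psi$ terms, yielding $\partial\alpha^\star/\partial r_1=-\nu_A^\star\nabla_p\mathcal{G}_A(p^\star)^\top$, which is nonzero on $\partial\mathcal{C}$ because Lemma~\ref{lem:scaling} gives $\nu_A^\star\neq 0$ and $\nabla_p\mathcal{G}_A(p^\star)=\mathbf{0}$ would force $\alpha^\star=0\neq\beta$. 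This buys two things the paper's version does not have: a rigorous replacement for the ``evident'' step (restricted, as it must be for your argument, to the setting of Theorem~\ref{thm:cont_diff} where the sensitivities exist), and a clean way around the fact that $\mathbf{Q}$ has rank three ($q^\top\mathbf{Q}=\mathbf{0}$), since you only ever test pure translations. The paper's argument is shorter and shape-agnostic but rests on an unproved geometric assertion; yours is longer but self-contained given the earlier lemmas. Your own flagged caveat is the right one: the envelope identity presupposes the differentiability established in Theorem~\ref{thm:cont_diff}, which is consistent with the lemma's implicit assumption that $\partial\mathbf{h}/\partial x$ exists, so there is no genuine gap.
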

\begin{proof}
We can write $\partial\mathbf{h}/\partial x$ as
\begin{equation}
    \frac{\partial\mathbf{h}}{\partial x} = \frac{\partial\mathbf{h}}{\partial\mu}\begin{bmatrix}
        \mathbf{I} & \mathbf{0}\\
        \mathbf{0} & \displaystyle\frac{1}{2}\mathbf{Q}
    \end{bmatrix}J(x) = \frac{\partial\mathbf{h}}{\partial\mu}\mathbf{A}_\mathbf{Q}J(x).
\end{equation}
It can be seen that if $(\partial\mathbf{h}/\partial\mu)\mathbf{A}_\mathbf{Q} = 0$, then no infinitesimally small change in pose (including translations and rotations) exists that changes $\mathbf{h}$. It is evident that this is not the case since the distance between two convex objects can always be modified by some combination of translations and rotations. Thus, we know that $(\partial\mathbf{h}/\partial\mu)\mathbf{A}_\mathbf{Q} \neq 0$. Since $J(x)$ has full row rank, we have $\partial\mathbf{h}/\partial x \neq \mathbf{0}$.
\end{proof}

\begin{rem}
In addition to Lemma~\ref{lem:cbf_constraint}, for a fully actuated system with no input constraint, there always exists a $u$ that satisfies the CBF constraint when $J(x)$ is non-singular. 
\end{rem}

\subsection{Safe Controller}

For robotic applications, robots and obstacles are often rigid multibody systems. We represent each rigid body using convex primitive shapes. Then, using the proposed CBF construction method, we write a CBF for each robot-obstacle-rigid-body pair within the robot's workspace. Let the robot and obstacles be segmented into $n_\mathcal{R}$ and $n_\mathcal{O}$ convex shapes, respectively. Then, we can construct $n_\mathcal{R} \times n_\mathcal{O}$ CBFs, and the same number of CBF constraints, one for each robot-obstacle-rigid-body pair. Then, we combine all the CBF constraints and write it as an element-wise inequality
\begin{equation}
    \frac{\partial\mathbf{H}}{\partial x}G(x)u \geq -\gamma\mathbf{H}(x), \ \ \mathbf{H} = \begin{bmatrix}
        \mathbf{h}_{i \times j}(x)
    \end{bmatrix}\in\mathbb{R}^{n_\mathcal{R} \times n_\mathcal{O}}
\end{equation}
where $i = 1, \cdots, n_\mathcal{R}$, $j = 1, \cdots, n_\mathcal{O}$, and $\mathbf{h}_{i \times j}(x)$ representing the CBF between the $i$-th robot segment and the $j$-th obstacle segment. CBF-based quadratic programs (CBFQPs) are commonly used in CBF-based methods to obtain safe control actions. CBFQP utilizes a performance controller $\pi_\mathrm{perf}$ to generate a reference control $u_\mathrm{ref}$. Then, the CBFQP acts as a safety filter that alters the possibly unsafe $u_\mathrm{ref}$ in a minimally invasive fashion to find its safe counterpart, i.e.,
\begin{align}
\label{eq:cbf_opt_perf}
    \min_{u}\ &\ \|u - u_\mathrm{ref}\|_2^2\\
    \mathrm{subject\ to}\ &\ \frac{\partial\mathbf{H}}{\partial x}G(x)u \geq -\gamma\mathbf{H}(x).\nonumber
\end{align}
This method works well when the performance controller is non-optimization-based, e.g., PID or control Lyapunov function (CLF) based controllers. For optimization-based controllers, we add the CBF constraints and solve it as a single optimization problem
\begin{align}
\label{eq:cbf_opt}
    \min_{u}\ &\ \mathcal{J}(u)\\
    \mathrm{subject\ to}\ &\ \frac{\partial\mathbf{H}}{\partial x}G(x)u \geq -\gamma\mathbf{H}(x)\nonumber
\end{align}
where $\mathcal{J}: \mathbb{R}^m \rightarrow \mathbb{R}$ is the objective function. We will demonstrate the use of these two control methods in Section~\ref{sec:experiments}. The controller computation flow is shown in Fig.~\ref{fig:DiffOptCBF}. Note that for two obstacles at similar distances with different scales, the CBF values would be smaller for the larger obstacle. However, the CBFQP controller would not favor one obstacle over the other since a feasible solution of the CBFQP would need to satisfy the CBF constraint of each individual obstacle.

\section{Experiments}
\label{sec:experiments}
This section shows the efficacy of our proposed approach. First, we show the performance of our method on a simulated mobile robot example. Then, we show the application of our method to a 7-degree-of-freedom (DOF) robot manipulator both in simulation and real life. In all our experiments, we set $\beta = 1.03$ and $\gamma = 5.0$, though a larger $\beta$ value would also work. All experiments are performed on a PC with 32GB of RAM and an Intel Core i7 11700 processor.

\begin{figure}[t!]
    \centering
    \includegraphics[width=0.49\textwidth]{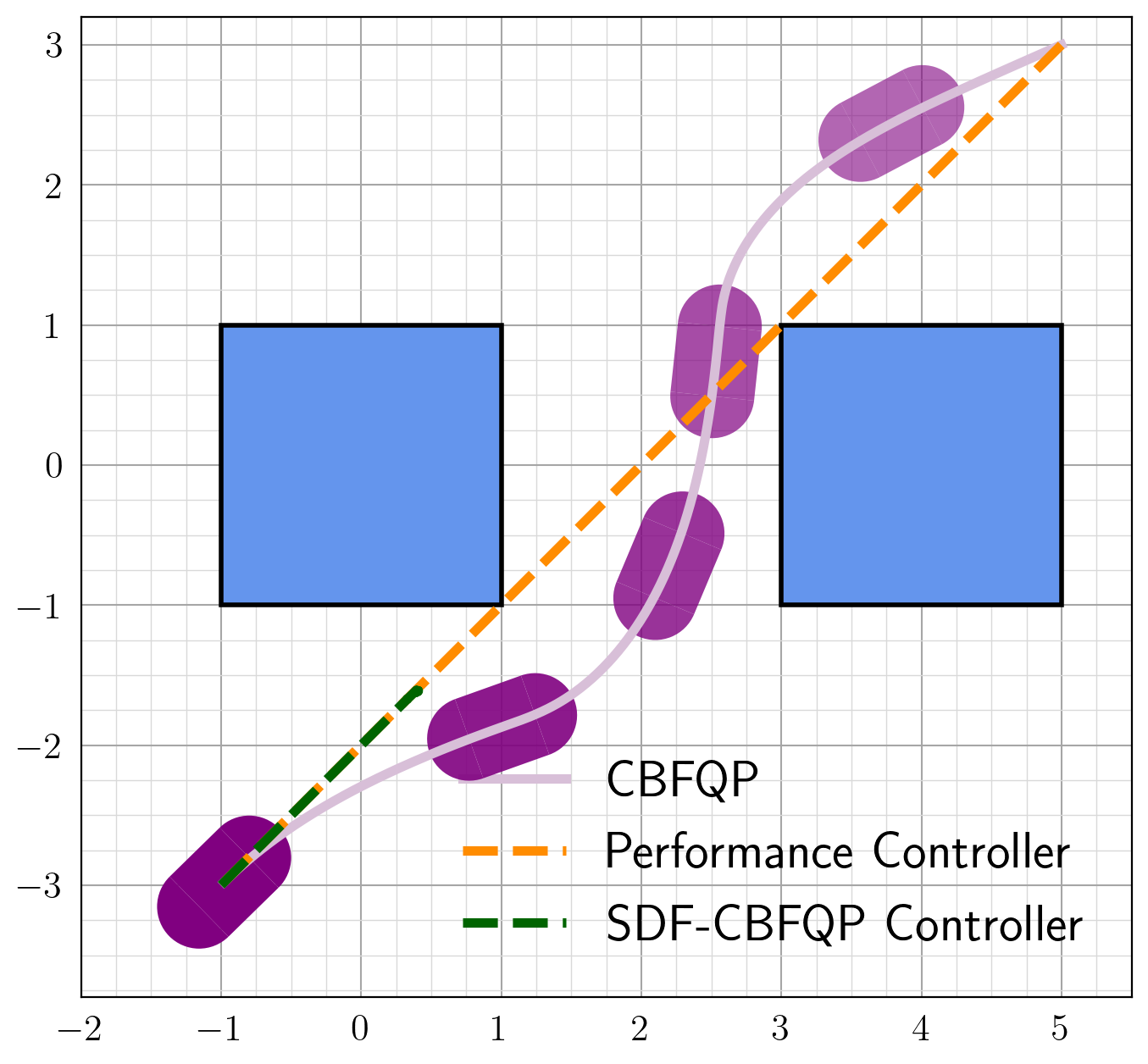}
    \caption{Generated motion for the mobile robot (purple capsule) in presence of obstacles (blue boxes).  The orange dashed curve is the motion generated by the performance controller $u_\mathrm{ref}$. The light purple curve represents the generated trajectory of the proposed CBFQP and the green dashed curve represents the generated trajectory of the SDF-CBFQP controller~\cite{SingletaryKA22} which prematurely terminates (i.e., without reaching the target state).}
    \label{fig:toy_example}
\end{figure}

\subsection{Mobile Robot Example}

\begin{figure*}[t!]
    \centering
    \includegraphics[width=\textwidth]{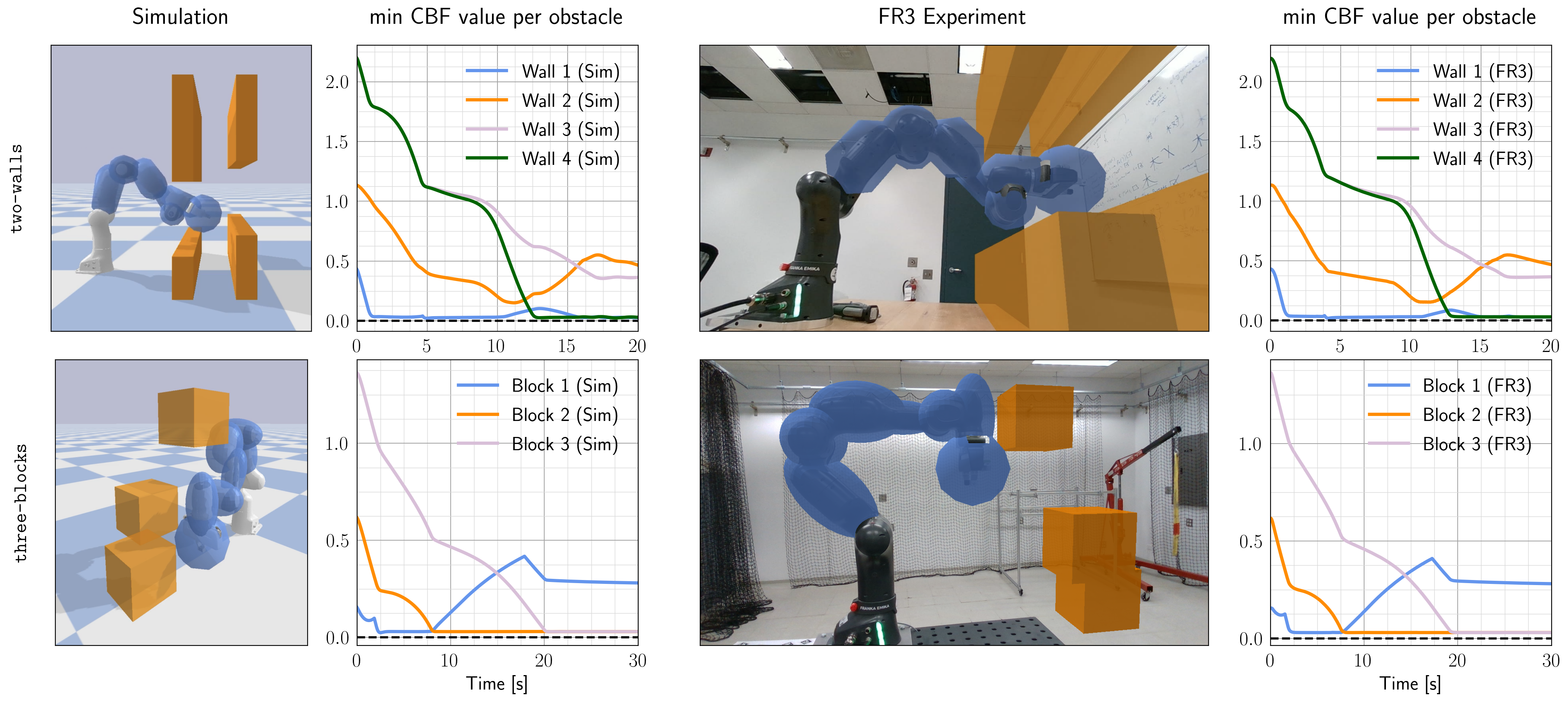}
    \vspace{-1em}
    \caption{This figure shows the proposed CBF-based obstacle avoidance controller can ensure safety on an FR3 robot, both in simulation and on the real robot. The simulation is performed using PyBullet with a time step of 1 ms. The walls are numbered in the front upper, front lower, back upper, and back lower order. The blocks are numbered in the top, middle, and bottom order. For each experiment, we record the CBF values of each link-obstacle pair, and for each obstacle, we show the minimum CBF value among all the links. For the real robot experiments, we synthetically show the bounding boxes as blue overlays.}
    \label{fig:manipulator_exp_results}
\end{figure*}

\begin{figure}[t!]
    \centering
    \includegraphics[width=0.49\textwidth]{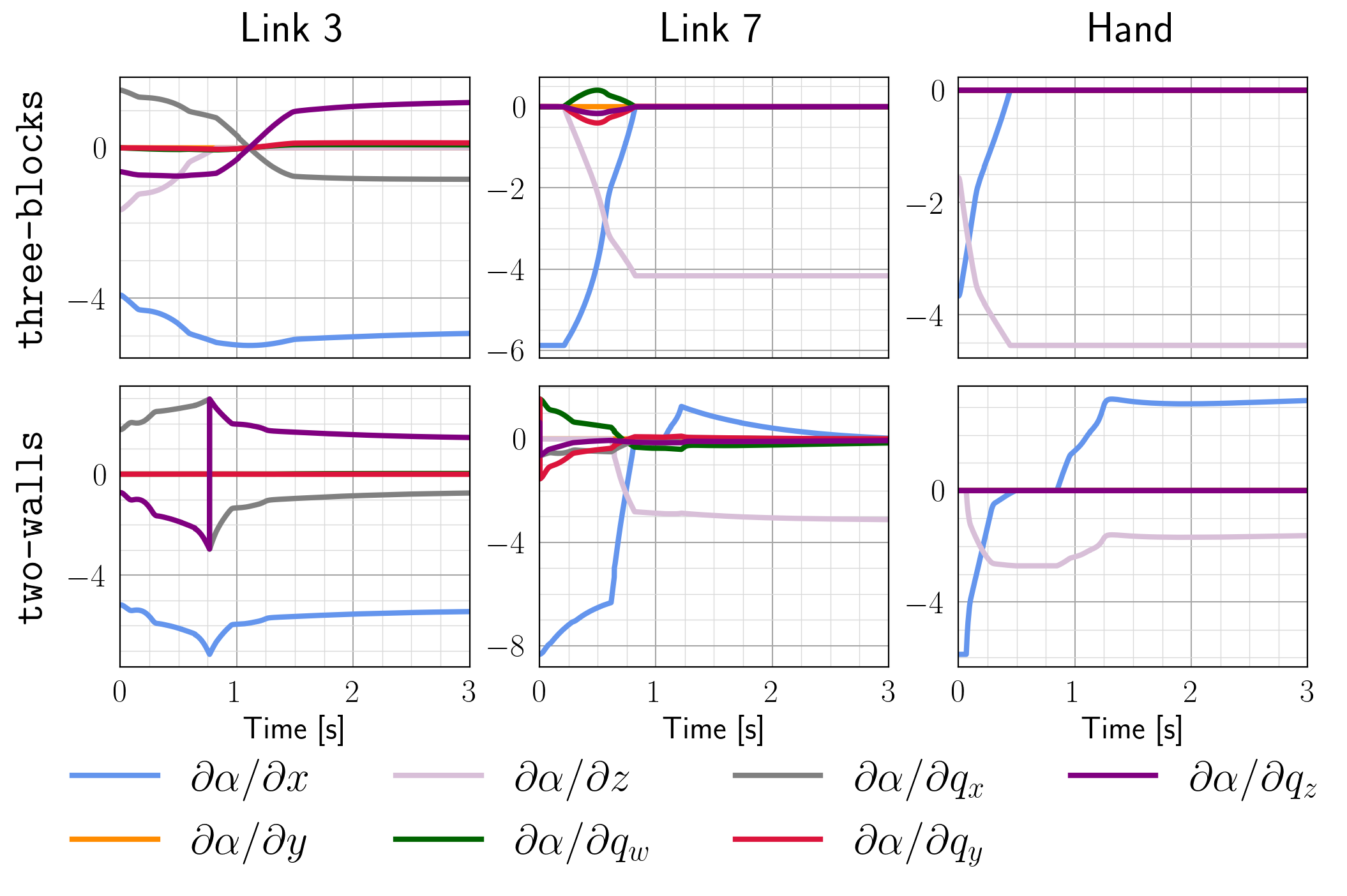}
    \vspace{-1em}
    \caption{The computed partial derivatives of the CBF in the real-world robot experiment on the \texttt{three-blocks} and \texttt{two-walls} task. The partial derivatives for link 4, link 5, and link 6 are similar to link 7, thus, in interest of space, they are not shown in the figure above.}
    \label{fig:jacobians_of_cbf_paper}
\end{figure}

We demonstrate the efficacy of our approach on a mobile robot example having a capsule shape and the obstacles are represented as polygons. We assume that we can directly control the robot's linear and angular velocity, i.e.,
\begin{equation}
    \begin{bmatrix}
        \dot{p}_x & \dot{p}_y & \dot{\phi}
    \end{bmatrix}^\top = \begin{bmatrix}
        v\cos\phi\\
        v\sin\phi\\ \omega
    \end{bmatrix}.
\end{equation}
The area the robot occupies after scaling it with $\alpha$ can be described using the constraints~\cite{TracyHM22}:
\begingroup
\allowdisplaybreaks
\begin{subequations}
\label{eq:toy_example_robot_constraint}
\begin{align}
    \begin{bmatrix}
        0\\
        0
    \end{bmatrix} - \begin{bmatrix}
        0_{1\times3} & -l/2 & 1\\
        0_{1\times3} & -l/2 & -1\\
    \end{bmatrix}\begin{bmatrix}
        p\\
        \alpha\\
        \delta
    \end{bmatrix} &\in \mathbb{R}_+^2\\
    \begin{bmatrix}
        0\\
        -r
    \end{bmatrix} - \begin{bmatrix}
        0_{1\times3} & -R & 0\\
        -I_{3\times3} & 0_{1\times3} & \hat{b}_x\\
    \end{bmatrix}\begin{bmatrix}
        p\\
        \alpha\\
        \delta
    \end{bmatrix} &\in \mathcal{Q}_4
\end{align}
\end{subequations}
\endgroup
with $r\in\mathbb{R}^3$ being the position of the capsule, $\mathcal{Q}_4 \subset \mathbb{R}^4$ the second-order cone, $R\in\mathbb{R}_+$ the radius of the capsule, $L\in\mathbb{R}_+$ the length of the capsule, $l\in\mathbb{R}_+$ the line segment distance, $\delta\in[-\alpha L/2, \alpha L/2]$ a slack variable, $\hat{b}_x = \mathbf{R}[1, 0, 0]^\top$, and $\mathbf{R}$ being the rotation matrix representing the orientation of the robot. For any point $p$ that satisfies~\eqref{eq:toy_example_robot_constraint}, it will belong to the scaled version of the robot body. The area occupied by the scaled version of the obstacles is represented using the constraint~\cite{TracyHM22}:
\begin{equation}
    A_oR_or_o - \begin{bmatrix}
        A_oR_o^\top & -b_o
    \end{bmatrix}\begin{bmatrix}
        p\\
        \alpha
    \end{bmatrix} \in \mathbb{R}_+,
\label{eq:toy_example_obstacle_constraint}
\end{equation}
with $A_o\in\mathbb{R}^{n_o\times2}$ and $b_o\in\mathbb{R}^{n_o}$ being the halfspace constraints, $r_o\in\mathbb{R}^2$ being the position of the obstacle, $R_o\in\mathbb{R}^{2\times2}$ being the rotation matrix of the obstacle, and $n_o$ representing the number of edges for the obstacle. Then, if we use~\eqref{eq:toy_example_robot_constraint} and~\eqref{eq:toy_example_obstacle_constraint} as the constraints in~\eqref{eq:differentiable_collision}, we can solve for the CBF defined in~\eqref{eq:diffopt_cbf}. We use a proportional controller as the performance controller
\begingroup
\allowdisplaybreaks
\begin{subequations}
\begin{align}
    v &= K_v\sqrt{(p_{t, x} - p_x)^2 + (p_{t, y} - p_y)^2}\\
    \omega &= K_\omega\Big[\mathrm{atan2}(p_{t, y} - p_y, p_{t, x} - p_x) - \phi\Big]
\end{align}
\end{subequations}
\endgroup
with $K_v = 0.5$, $K_\omega = 2.0$, and the target position $(p_{t, x}, p_{t, y}) = (5.0, 3.0)$. The control is obtained by solving~\eqref{eq:cbf_opt_perf}. The generated motion is shown in Fig.~\ref{fig:toy_example}. Using our proposed CBF with a simple performance controller, the controller in~\eqref{eq:cbf_opt_perf} can generate fairly complex maneuvers to ensure safety. When testing the SDF-based CBF proposed in~\cite{SingletaryKA22}, the mobile robot gets stuck when getting close to the obstacle, and the CBFQP often fails to find a feasible solution. This is due to a reduced feasible set caused by the conservativeness of the approximated partial derivative of the SDF in the CBF constraint. On average, the SDF-CBF takes $9 \mu s$ to compute, while our proposed CBF takes $34 \mu s$. However, SDF-based CBFs failed to solve the task.

\subsection{7-DOF Robotic Arm}
\label{sec:robot_arm_exp}
For the FR3 experiments, we constructed two settings: two walls with a gap on each of them (referred as the \texttt{two-walls} task) and three blocks scattered in the workspace (referred as the \texttt{three-blocks} task). We encapsulate the links with ellipsoid-shaped bounding boxes for the \texttt{three-blocks} task and capsule-shaped bounding boxes for the \texttt{two-walls} task. In both settings, the end-effector is encapsulated with a sphere. Since the CoM of the two base links does not have relative translation with respect to the base, we do not need to encapsulate them. This gives us, in total, seven bounding boxes (two bounding boxes encapsulate the fifth link) and $7n_\mathcal{O}$ CBFs for each of the experiments. We use a resolved rate controller with joint centering to obtain the desired joint velocity subject to the CBF constraint. The resolved rate controller cost $\mathcal{J}_{r}$ and the joint-centering cost $\mathcal{J}_{c}$ are
\begingroup
\allowdisplaybreaks
\begin{subequations}
\begin{align}
    \mathcal{J}_{r} &= \big\|J(\theta)\dot{\theta}_\mathrm{des} - \Big[K_p(p_\mathrm{des} - p) + \dot{p}_\mathrm{des}\Big]\big\|_2^2\\
    \mathcal{J}_{c} &= \big\|\mathcal{N}(\theta)[\dot{\theta}_\mathrm{des} - K_p^\prime(\theta_\mathrm{nominal} - \theta)]\big\|_2^2
\end{align}
\end{subequations}
\endgroup
with $\theta \in\mathbb{R}^7$ representing the joint angles, $\mathcal{N}\in\mathbb{R}^{n_j \times n_j}$ being the nullspace projection matrix $\mathcal{N}(\theta) = \mathbf{I} - J^\dagger(\theta)J(\theta)$ and $J^\dagger\in\mathbb{R}^{n_j \times 6}$ representing the pseudo-inverse of $J$. Then, we solve for $u = \dot{\theta}_\mathrm{des}$ using
\begin{align}
    \min_{u}\ &\ \mathcal{J}_{r} + \epsilon\mathcal{J}_{c}\\
    \mathrm{subject\ to}\ &\ \frac{\partial\mathbf{H}}{\partial x}u \geq -\gamma\mathbf{H}(x)\nonumber
\end{align}
with $\epsilon\in\mathbb{R}_+$ being a weighting parameter and $G(x) = \mathbf{I}_{n_j \times n_j}$. The torque command to track $\dot{\theta}_\mathrm{des}$ is generated using a derivative controller with gravity compensation 
\begin{equation}
    \tau = K_d(\dot{\theta}_\mathrm{des} - \dot{\theta}) + \mathbf{G}_{g}(x) \in \mathbb{R}^{n_j}
\end{equation}
with $K_d \in \mathbb{R}^{n_j \times n_j}$ being a diagonal matrix and $\mathbf{G}_{g}:\mathbb{R}^n \rightarrow \mathbb{R}^n_j$ the generalized gravitational vector. For the \texttt{two-walls} task, the goal is to reach a target point within the gap of the wall in the back. For the \texttt{three-blocks} task, the robot to reach a point within the lowest block. In both cases, the desired end-effector position is set to be the target point $p_\mathrm{des} = p_\mathrm{target}$, and the desired end-effector velocity to be zero $\dot{p}_\mathrm{des} = \mathbf{0}_{3\times1}$. 

The results of our proposed method on these two tasks are shown in Fig.~\ref{fig:manipulator_exp_results}. For \texttt{two-walls}, the end-effector reaches the target position while ensuring safety. The end-effector reaches the closest point to the target that avoids collision for \texttt{three-blocks}. At each time step, our method computes $7n_\mathcal{O}$ CBFs, seven for each obstacle, and we show the change in the minimum value among the seven CBFs for each obstacle in Fig.~\ref{fig:manipulator_exp_results}. For \texttt{two-walls} and \texttt{three-blocks}, the average CBF computation time is 0.24ms and 0.20ms, respectively. For both tasks, in simulations and on the real robot, the velocity control is updated at 100Hz. 

We empirically show that the proposed CBF is continuously differentiable. The individual elements of $\partial\alpha/\mu$ for the real-world experiment on the \texttt{three-blocks} task are plotted in Fig.~\ref{fig:jacobians_of_cbf_paper}. We see that all partial derivatives are continuous, echoing our claim in Conjecture~\ref{conj:cbf_condition}. For the \texttt{two-walls} task, we see that when continuous differentiability does not hold globally, the approach is still viable in practice and safety can be achieved using our proposed CBF. This is because even when the CBF is not globally continuously differentiable, such discontinuities would intuitively be expected to be over a sparse set (e.g., a set of measure zero). Hence, in practice, the likelihood of encountering the exact poses for which the CBF is not continuously differentiable will be minimal. This is corroborated by the empirical observation that we did not observe any undesired behaviors, e.g., no unsafe behavior or large control commands. The simulation and experimental results can be found at \url{https://youtu.be/WhfFZT1oyJE}.
\section{Conclusion}
\label{sec:conclusion}

This paper presents a systematic and computationally simple approach to construct CBFs using differentiable optimization based collision detectors. We showed continuous differentiability for strongly convex scaling functions when the gradients and Hessians exist and are continuous. We conjecture that the continuous differentiability can be generalized to one object being strongly convex and the second being only convex. We experimentally demonstrated the efficacy of our approach on a mobile robot in simulation and a 7-DOF robot manipulator in both simulations and on the real robot. In the future, we plan to extend our approach to consider input and state constraints and apply them to multi-robot collaboration tasks.

\bibliographystyle{IEEEtran}
\bibliography{IEEEabrv, refs.bib}
\end{document}